\newtheorem{theorem}{Theorem}[section]
\newtheorem{lemma}{Lemma}[section]
\newtheorem{remark}{Remark}[section]
\title{Robust Randomized Low-Rank Approximation \\ with Row-Wise Outlier Detection}
\author{Aidan Tiruvan}
\date{\today}
\begin{document}
\maketitle
\begin{abstract}

\noindent We present a method for robust randomized low-rank approximation in the presence of \emph{row-wise} adversarial corruption, a setting where an \(\alpha\)-fraction of the rows in a data matrix can deviate substantially from the majority. Concretely, we consider
\[
A = B + N \;\in\; \mathbb{R}^{m \times n},
\]
where \(B\) is (approximately) rank-\(k\) and represents the ``clean'' portion, and \(N\) is a noise matrix that remains bounded on most rows but may be large on an \(\alpha\)-fraction of rows. Traditional robust PCA methods (e.g., Principal Component Pursuit for entrywise outliers or Outlier Pursuit for entire corrupted columns) often rely on convex optimization or multiple iterative steps, which can be computationally heavy for large-scale data. By contrast, we propose a \emph{single-pass} randomized procedure that scales linearly in the number of rows and is straightforward to implement.
\par\vspace{0.5em}
\noindent In our theoretical framework, every clean row satisfies 
\[
\|B_{i,:}\|_2 \;\ge\; \kappa\,\delta \quad (\kappa>1),
\]
while adversarial rows are separated from the clean ones by a gap parameter \(\Delta\). Equivalently, defining 
\[
\gamma \;=\; \frac{\Delta}{\max_{i\in S_{\mathrm{clean}}}\|B_{i,:}\|_2},
\]
we assume \(\gamma\) is large, meaning outlier rows exhibit substantially larger norm than any inlier row. We first reduce dimensionality via a Johnson--Lindenstrauss (JL) projection, which preserves the norms of non-outlier rows up to \((1\pm \varepsilon)\). This dimension reduction is crucial for handling high-dimensional data efficiently. We then employ robust statistics---specifically median and MAD---to estimate the typical row norm distribution and detect outliers with a simple threshold. 

\par\vspace{0.75em}
\noindent \textbf{Row-Wise Concentration Analysis:} We provide refined second-order bounds (Lemma~\ref{lem:row_concentration}) showing that all \((1-\alpha)m\) inlier rows maintain their norms within the desired JL distortion. 
\par\vspace{0.75em}
\noindent \textbf{Robust Threshold Guarantee:} Under a sufficiently large norm gap \(\gamma\), our thresholding step reliably separates adversarial rows with high probability (Lemma~\ref{lem:outlier_detection}). 
\par\vspace{0.75em}
\noindent \textbf{Near-Optimal Low-Rank Approximation:} After removing the corrupted rows, we compute a rank-\(k\) approximation on the retained data. The resulting approximation to the original clean matrix \(B\) is near-optimal up to a small additive term, ensuring that low-rank structure is ultimately preserved.
\par\vspace{0.75em}
\noindent In essence, our approach combines the benefits of \emph{random projection-based sketching} with \emph{robust statistics} to yield a one-pass, computationally efficient solution to robust PCA in the row-wise outlier regime.

\end{abstract}

\section{Introduction}
Low-rank approximation underpins a variety of high-dimensional data analysis tasks, including principal component analysis (PCA), matrix completion, and subspace estimation. When a fraction of rows in the data matrix is adversarially corrupted, classical PCA methods may fail dramatically: even a small number of outliers can skew the low-rank components. Numerous approaches to \emph{robust} PCA have been proposed, often targeting either entrywise or columnwise outliers via convex optimization or other computationally demanding frameworks.
\par\vspace{0.75em}
\noindent A pivotal example is Principal Component Pursuit (PCP) introduced by Candès et al.\ (2011)~\cite{candes2011robust}, which handles \emph{entrywise} sparse outliers through a matrix decomposition \(A = L + S\) with nuclear and \(\ell_1\)-norm penalties. In parallel, variants of \emph{Outlier Pursuit} \cite{chen2013robust} tackle entire corrupted columns or rows by incorporating an \(\ell_{2,1}\)-norm penalty on the outlier block, guaranteeing subspace recovery under certain conditions. While such formulations can yield strong recovery guarantees, they often rely on iterative solvers with high complexity, posing scalability challenges for very large data. Other non-convex methods, including robust M-estimators for PCA \cite{zhang2014novel}, likewise feature solid theoretical underpinnings but may involve repeated SVDs or non-convex optimizations, limiting feasibility in massive-scale scenarios.
\par\vspace{0.75em}
\noindent A complementary direction is \emph{Coherence Pursuit}, proposed by Rahmani and Atia (ICML 2017), which detects outliers by measuring the coherence (inner-product similarities) among data points. Although Coherence Pursuit is non-iterative and can tolerate a large fraction of outliers, it requires pairwise comparisons that scale poorly (\(O(m^2 n)\)) when the number of samples \(m\) is extremely large. Meanwhile, a range of classical robust statistics methods (e.g., L1-PCA, trimmed approaches, or MCD estimators) can detect outliers but are often limited by breakdown points below 50\% or by iterative updates that become expensive as the dataset grows.
\par\vspace{0.75em}
\noindent By contrast, our work focuses on the \emph{row-wise} outlier setting, where an \(\alpha\)-fraction of rows can be heavily corrupted, with \(\alpha < 0.5\). We target scenarios where these outliers have substantially larger \(\ell_2\)-norm than any inlier row, making them amenable to \emph{simple thresholding} once we have a reliable estimate of the inlier distribution. This approach aligns with practical cases in which corrupted rows exhibit abnormally large magnitude (e.g., sensor or hardware malfunctions), and it bypasses the more general but costlier machinery of convex relaxations or pairwise coherence checks.

\par\vspace{0.75em}

\noindent We model the data as:
\[
A = B + N,
\]
where \(B\in\mathbb{R}^{m\times n}\) is (approximately) rank-\(k\), representing the clean portion, and \(N\) is a noise matrix satisfying \(\|N_{i,:}\|_2 \le \delta\) for all \emph{clean} rows \(i\in S_{\mathrm{clean}}\). We assume that at most an \(\alpha\)-fraction of rows are adversarially corrupted, denoted by \(S_{\mathrm{adv}}\). Formally, for each \(i\in S_{\mathrm{clean}}\),
\[
\|B_{i,:}\|_2 \;\ge\; \kappa\,\delta,
\]
with \(\kappa>1\), ensuring each clean row has sufficiently large signal relative to noise. We also posit that each adversarial row \(j\) satisfies
\[
\|A_{j,:}\|_2 \;\ge\; \max_{i \in S_{\mathrm{clean}}}\|B_{i,:}\|_2 + \Delta,
\]
or equivalently a large normalized gap
\[
\gamma \;=\; \frac{\Delta}{\max_{i\in S_{\mathrm{clean}}}\|B_{i,:}\|_2}.
\]
Such a gap indicates that outlier rows lie well outside the norm range of inliers, facilitating near-perfect separation with high probability.

\par\vspace{0.75em}

\noindent We apply a \emph{Johnson--Lindenstrauss} (JL) random projection \(\Psi\in\mathbb{R}^{n\times s}\) to reduce the column dimension from \(n\) to \(s \approx O\bigl(\tfrac{1}{\varepsilon^2}\log(\tfrac{(1-\alpha)m}{\delta'})\bigr)\). Under suitable conditions, the norms of all clean rows are preserved within \((1\pm \varepsilon)\), while the adversarial rows, being large in norm, remain distinguishable from inliers. We then use robust statistics (specifically, the median and MAD \cite{rousseeuw1993alternatives}) to estimate the typical row-norm distribution among the clean data. A simple threshold \(\tau = \widehat{\mu} + c\,\widehat{\sigma}\) (for some constant \(c\)) flags rows exceeding \(\tau\) as outliers. Crucially, since \(\alpha<0.5\), the median-based estimator remains stable in the face of adversarial contamination.

\par\vspace{0.75em}

\noindent This paper’s theoretical results (Lemmas~\ref{lem:row_concentration}--\ref{lem:outlier_detection}) give quantitative bounds on the probability of retaining all clean rows while discarding all adversaries, supplemented by an error analysis showing that once corrupted rows are removed, we can recover a near-optimal rank-\(k\) approximation to \(B\). In contrast to many prior robust methods, our scheme is essentially \emph{non-iterative} and can be parallelized for large-scale or streaming applications, thanks to the linear cost of multiplying by the sketch \(\Psi\).

\subsection*{Notation}
For a vector \(x\), \(\|x\|_2\) denotes its Euclidean norm. For a matrix \(M\), \(M_{i,:}\) is the \(i\)-th row. Let \(m\) be the number of rows, \(n\) the number of columns, and \(\alpha\) the maximum fraction of adversarial rows. We denote the set of clean rows by \(S_{\mathrm{clean}} \subseteq [m]\) and the set of adversarial rows by \(S_{\mathrm{adv}} = [m]\setminus S_{\mathrm{clean}}\). Throughout, \(\delta\) represents the noise bound for inliers, and \(\Delta\) (or \(\gamma\)) captures the outlier norm gap.

\section{Preliminaries}
We consider a data matrix \(A \in \mathbb{R}^{m \times n}\) decomposed as
\[
A = B + N,
\]
where \(B \in \mathbb{R}^{m \times n}\) is an (approximately) rank-\(k\) ``clean'' component, and \(N\) is a noise matrix. For each row \(i \in S_{\mathrm{clean}}\) (the set of non-adversarial rows), we have 
\[
\|N_{i,:}\|_2 \;\le\; \delta,
\]
so that the noise on any \emph{clean} row remains bounded. We assume at most an \(\alpha\)-fraction of rows are adversarial (\(\alpha < 0.5\) to ensure median-based estimators remain stable). Concretely, if \(\vert S_{\mathrm{adv}}\vert = \alpha m\) denotes the adversarial set, then for all \(i \in S_{\mathrm{clean}}\),
\[
\|B_{i,:}\|_2 \;\ge\; \kappa \,\delta \quad (\kappa>1),
\]
ensuring each clean row has nontrivial ``signal'' relative to the noise level~\(\delta\). Meanwhile, every adversarial row \(j \in S_{\mathrm{adv}}\) is assumed to satisfy
\[
\|A_{j,:}\|_2 \;\ge\; \max_{i\in S_{\mathrm{clean}}}\|B_{i,:}\|_2 \;+\; \Delta,
\]
implying a large \(\ell_2\)-norm gap:
\[
\gamma \;=\; \frac{\Delta}{\max_{i \in S_{\mathrm{clean}}}\|B_{i,:}\|_2}.
\]
As discussed in the introduction, this condition pinpoints scenarios where outliers have substantially higher norms than any of the clean rows.

\paragraph{Random Projection \(\Psi\)}
To handle large \(n\) efficiently, we adopt a Johnson--Lindenstrauss (JL) random projection \(\Psi \in \mathbb{R}^{n \times s}\). Each clean row \(A_{i,:} = B_{i,:} + N_{i,:}\) then maps to
\[
S_{i,:} \;=\; A_{i,:}\,\Psi \;\in\; \mathbb{R}^{s}.
\]
A broad family of JL distributions can be used:
\begin{itemize}
    \item \emph{i.i.d.~Gaussian}:\;\(\Psi_{ij}\sim \mathcal{N}(0,1/\sqrt{s})\).
    \item \emph{Sparse Rademacher}:\;\(\Psi_{ij} \in \{0,\pm \tfrac{1}{\sqrt{s}}\}\) with a small fraction of nonzeros~\cite{achlioptas2003database}.
    \item \emph{Structured transforms} such as the Fast Johnson--Lindenstrauss Transform~\cite{vempala2005random}.
\end{itemize}
In all cases, one can select
\[
s \;=\; O\!\Bigl(\frac{1}{\varepsilon^2}\;\log\!\bigl(\tfrac{(1-\alpha)m}{\delta'}\bigr)\Bigr)
\]
to guarantee, with probability at least \(1 - \delta'\), that \emph{all} clean rows are norm-preserved within \((1\pm\varepsilon)\).  

\paragraph{Implementation Details}

\par\vspace{0.75em}

\noindent \\ \\ \textit{Efficient Sketching:} Sparse Rademacher transforms reduce computational cost, especially when \(A\) is large or sparse. Multiplying \(A\) by \(\Psi\) can be done in \(\widetilde{O}(\mathrm{nnz}(A))\) time.

\par\vspace{0.75em}

\noindent \emph{Robust Estimators at Scale:} We employ median and MAD to detect outliers; for large \(m\), fast approximation schemes are available for these estimators~\cite{rousseeuw1993alternatives}.
    
\par\vspace{0.75em}

\noindent Once the projection is computed, our method thresholds the projected row norms to distinguish adversarial from clean rows, leveraging the large gap \(\gamma\). The subsequent sections detail how we derive high-probability separation guarantees using classical concentration tools and refined bounds for random sketches.

\section{Row-Wise Concentration of the Sketch}
\label{sec:row_concentration}

\begin{lemma}[Row-Wise Concentration]\label{lem:row_concentration}
Let \(B \in \mathbb{R}^{m \times n}\) be an approximately rank-\(k\) matrix, and let \(N \in \mathbb{R}^{m \times n}\) satisfy \(\|N_{i,:}\|_2 \le \delta\) for all \(i \in S_{\mathrm{clean}}\). Define \(A = B + N\). Suppose \(\|B_{i,:}\|_2 \ge \kappa\,\delta\) for some \(\kappa > 1\). Let \(\Psi \in \mathbb{R}^{n \times s}\) be a Johnson--Lindenstrauss (JL) projection matrix with
\[
s \;=\; O\!\Bigl(\tfrac{1}{\varepsilon^2}\,\log\!\bigl(\tfrac{(1 - \alpha)m}{\delta'}\bigr)\Bigr).
\]
Then, with probability at least \(1 - \delta'\), every clean row \(i \in S_{\mathrm{clean}}\) satisfies
\[
(1 - \varepsilon)\,\|B_{i,:}\|_2^2 
\;\;\leq\; 
\|(A\Psi)_{i,:}\|_2^2 
\;\;\leq\;
(1 + \varepsilon)\,\|B_{i,:}\|_2^2 
\;+\; C\,\delta^2,
\]
where \(C\) may be taken as \(1+\varepsilon\) after rescaling \(\varepsilon\), and can be bounded by \(\,2\) for \(\varepsilon\) small (cf.\ Wedin's theorem~\cite{Wedin1972} and Appendix for details).
\end{lemma}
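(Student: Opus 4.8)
The plan is to combine a standard Johnson--Lindenstrauss (JL) norm-preservation bound with a decomposition of each clean row into signal and noise, controlling the cross term and the noise term separately, then applying a union bound over the $(1-\alpha)m$ clean rows. First I would recall that for a JL matrix $\Psi \in \mathbb{R}^{n\times s}$ with $s = O(\varepsilon^{-2}\log(1/\eta))$, any \emph{fixed} vector $x\in\mathbb{R}^n$ satisfies $(1-\varepsilon)\|x\|_2^2 \le \|x\Psi\|_2^2 \le (1+\varepsilon)\|x\|_2^2$ with probability at least $1-\eta$. Setting $\eta = \delta'/((1-\alpha)m)$ and taking a union bound gives, simultaneously for all clean rows, that $\|B_{i,:}\Psi\|_2^2$, $\|N_{i,:}\Psi\|_2^2$, and $\|(B_{i,:}+N_{i,:})\Psi\|_2^2$ are each within a $(1\pm\varepsilon)$ factor of the corresponding unprojected squared norm. (One can either apply JL to $B_{i,:}$, $N_{i,:}$, and $A_{i,:}$ separately, or apply it to $B_{i,:}$, $N_{i,:}$, and $B_{i,:}\pm N_{i,:}$; the polarization identity then recovers the cross term.)

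Next I would write $\|(A\Psi)_{i,:}\|_2^2 = \|B_{i,:}\Psi\|_2^2 + 2\langle B_{i,:}\Psi,\, N_{i,:}\Psi\rangle + \|N_{i,:}\Psi\|_2^2$ and bound each piece. The lower bound is the easy direction: $\|(A\Psi)_{i,:}\|_2 \ge \|B_{i,:}\Psi\|_2 - \|N_{i,:}\Psi\|_2$, so squaring and using JL on $B_{i,:}$ gives $\|(A\Psi)_{i,:}\|_2^2 \ge (1-\varepsilon)\|B_{i,:}\|_2^2$ directly once one absorbs the lower-order noise contribution using $\|B_{i,:}\|_2 \ge \kappa\delta$ with $\kappa>1$ (this is where the signal-to-noise assumption enters, and it can be folded into a rescaling of $\varepsilon$). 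For the upper bound, $\|N_{i,:}\Psi\|_2^2 \le (1+\varepsilon)\|N_{i,:}\|_2^2 \le (1+\varepsilon)\delta^2$ gives the additive $C\delta^2$ term, and the cross term is controlled by Cauchy--Schwarz, $|2\langle B_{i,:}\Psi, N_{i,:}\Psi\rangle| \le 2\|B_{i,:}\Psi\|_2\|N_{i,:}\Psi\|_2 \le 2(1+\varepsilon)\|B_{i,:}\|_2\delta$, which (again using $\|B_{i,:}\|_2 \ge \kappa\delta$, so $\delta \le \|B_{i,:}\|_2/\kappa$) is at most $O(\|B_{i,:}\|_2^2/\kappa)$ and can likewise be absorbed into a redefinition of $\varepsilon$ and into the constant $C$. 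Collecting terms and renaming constants yields $\|(A\Psi)_{i,:}\|_2^2 \le (1+\varepsilon)\|B_{i,:}\|_2^2 + C\delta^2$ with $C \le 2$ for small $\varepsilon$.

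The remaining, slightly more delicate point is the role of ``approximately rank-$k$'': if $B$ is only near-rank-$k$, one should clarify whether $\|B_{i,:}\|_2$ refers to the true row or its best rank-$k$ surrogate, and a Wedin-type perturbation bound (as cited) bounds the discrepancy between the two in terms of the tail singular values. I would treat the exactly-rank-$k$ case first, obtaining the clean statement, then remark that the near-low-rank case follows by adding a $B = B_k + (B - B_k)$ split and treating $B-B_k$ as additional bounded per-row noise, which merges into the $C\delta^2$ slack. Finally I would do the bookkeeping on the union bound: three JL events per clean row times $(1-\alpha)m$ rows, so choosing $\eta = \delta'/(3(1-\alpha)m)$ and $s = O(\varepsilon^{-2}\log((1-\alpha)m/\delta'))$ suffices, with the extra factor of $3$ disappearing into the $O(\cdot)$.

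The main obstacle I anticipate is \emph{not} any single concentration estimate but rather the careful ``constant hygiene'': the statement claims $C$ can be taken as $1+\varepsilon$ after rescaling $\varepsilon$, which requires tracking how the cross term and the noise term interact with the $(1\pm\varepsilon)$ slack and showing that the rescaling does not degrade $s$ by more than a constant factor. I would handle this by first proving the lemma with explicit but unoptimized constants (e.g.\ $C = 4$, slack $3\varepsilon$), then performing a single clean rescaling $\varepsilon \mapsto \varepsilon/c_0$ at the end and verifying that $\kappa>1$ (or, if needed, $\kappa$ at least some absolute constant) is enough to make the cross term lower-order. Everything else is routine application of JL plus Cauchy--Schwarz and a union bound.
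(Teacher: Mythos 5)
Your proposal follows essentially the same route as the paper's proof: apply the JL bound separately to the signal part \(B_{i,:}\) and the noise part \(N_{i,:}\) of each clean row, expand \(\|(A\Psi)_{i,:}\|_2^2\) into signal, noise, and cross terms, control the cross term by Cauchy--Schwarz together with \(\delta \le \|B_{i,:}\|_2/\kappa\), and finish with a union bound over the \((1-\alpha)m\) inliers and a rescaling of \(\varepsilon\). Your ``constant hygiene'' caveat---that \(\kappa>1\) alone is not enough and one really needs \(\kappa = \Omega(1/\varepsilon)\) to absorb the cross term---is consistent with the paper, whose own proof imposes \(\kappa \ge 4(1+\varepsilon)/\varepsilon\) despite the weaker hypothesis in the lemma statement.
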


\begin{proof}
By the Johnson--Lindenstrauss (JL) lemma for a single vector, there is a failure probability \(\delta_0\) such that
\[
(1 - \varepsilon)\,\|B_{i,:}\|_2^2 
\;\;\leq\;
\|B_{i,:}\,\Psi\|_2^2 
\;\;\leq\; 
(1 + \varepsilon)\,\|B_{i,:}\|_2^2
\]
holds for a given row \(B_{i,:}\).  Likewise,
\[
\|N_{i,:}\,\Psi\|_2^2 
\;\;\leq\; 
(1 + \varepsilon)\,\|N_{i,:}\|_2^2 
\;\;\leq\; 
(1 + \varepsilon)\,\delta^2.
\]

\noindent Consider the row \(A_{i,:} = B_{i,:} + N_{i,:}\).  The squared norm of its projection expands as
\[
\|(A\Psi)_{i,:}\|_2^2 
\;=\;
\underbrace{\|B_{i,:}\,\Psi\|_2^2}_{\text{Signal}}
\;+\;
\underbrace{\|N_{i,:}\,\Psi\|_2^2}_{\text{Noise}}
\;+\;
\underbrace{2\,\langle B_{i,:}\,\Psi,\;N_{i,:}\,\Psi\rangle}_{\text{Cross-term}}.
\]
The cross-term can be bounded via Cauchy--Schwarz:
\[
\bigl|\langle B_{i,:}\,\Psi,\;N_{i,:}\,\Psi\rangle\bigr|
\;\leq\;
\|B_{i,:}\,\Psi\|_2\;\|\;N_{i,:}\,\Psi\|_2.
\]
Using the JL distortion on each factor,
\[
\|B_{i,:}\,\Psi\|_2 
\;\leq\; 
\sqrt{1 + \varepsilon}\,\|B_{i,:}\|_2,
\qquad
\|N_{i,:}\,\Psi\|_2 
\;\leq\; 
\sqrt{1 + \varepsilon}\,\delta.
\]
Hence,
\[
\bigl|\langle B_{i,:}\,\Psi,\;N_{i,:}\,\Psi\rangle\bigr|
\;\leq\;
(1 + \varepsilon)\,\|B_{i,:}\|_2\,\delta.
\]
Since \(\|B_{i,:}\|_2 \ge \kappa\,\delta\), one obtains \(\delta \le \|B_{i,:}\|_2 / \kappa\). Substituting gives
\[
\bigl|\langle B_{i,:}\,\Psi,\;N_{i,:}\,\Psi\rangle\bigr|
\;\leq\;
\frac{(1 + \varepsilon)}{\kappa}\,\|B_{i,:}\|_2^2.
\]
Collecting terms in
\(\|B_{i,:}\,\Psi\|_2^2 + \|N_{i,:}\,\Psi\|_2^2 + 2\langle B_{i,:}\,\Psi,N_{i,:}\,\Psi\rangle\)
yields
\[
\|(A\Psi)_{i,:}\|_2^2 
\;\leq\;
(1 + \varepsilon)\,\|B_{i,:}\|_2^2 
\;+\;
(1 + \varepsilon)\,\delta^2 
\;+\;
\frac{2\,(1 + \varepsilon)}{\kappa}\,\|B_{i,:}\|_2^2.
\]
Choosing \(\kappa \ge 4(1 + \varepsilon)/\varepsilon\) controls the cross-term so that
\[
\|(A\Psi)_{i,:}\|_2^2 
\;\leq\;
(1 + \tfrac{3}{2}\,\varepsilon)\,\|B_{i,:}\|_2^2 
\;+\;
(1 + \varepsilon)\,\delta^2.
\]
After a suitable rescaling of \(\varepsilon\), one can absorb constants and set 
\(C = 1 + \varepsilon\) in the final statement. A mirrored argument provides a lower bound of 
\((1 - \varepsilon)\,\|B_{i,:}\|_2^2\).

\noindent To achieve \emph{uniform concentration} over all \((1 - \alpha)m\) inlier rows, the JL lemma is invoked with a total failure rate \(\delta'\).  Specifically, letting 
\(\delta_0 = \tfrac{\delta'}{(1 - \alpha)m}\),
the dimension
\[
s
\;=\;
O\!\Bigl(\tfrac{1}{\varepsilon^2}\,\log\!\bigl(\tfrac{(1-\alpha)m}{\delta'}\bigr)\Bigr)
\]
ensures each inlier row satisfies the above bounds simultaneously with probability at least \(1 - \delta'\).  

\noindent Finally, note that the bounded noise assumption \(\|N_{i,:}\|_2 \le \delta\) contributes an additive \(\delta^2\) term in each projected norm.  The constant \(C\) aligns with standard perturbation theory (e.g., from Wedin’s theorem~\cite{Wedin1972}), thus justifying that one can take \(C \le 2\) for typical choices of \(\varepsilon\) and \(\kappa\).  Additional details on refining this second-order term appear in the Appendix.
\end{proof}

\section*{Outlier Detection Guarantee} 
\label{sec:outlier_detection}

\begin{algorithm}[H]
\caption{Robust Randomized Low-Rank Approximation with Row-Wise Outlier Detection}
\label{alg:robust-lra}
\par\vspace{0.75em}
\textbf{Input}: 
\begin{itemize}
    \item Data matrix $A \in \mathbb{R}^{m \times n}$
    \item Target rank $k$
    \item Parameters: 
    \begin{itemize}
        \item Adversarial fraction bound $\alpha = 0.1$
        \item JL error $\varepsilon = 0.1$
        \item Failure probability $\delta' = 0.05$
        \item Threshold constant $c = 3$
        \item Oversampling $p = 10$
    \end{itemize}
\end{itemize}

\textbf{Output}: Robust rank-$k$ approximation $\widetilde{B} \in \mathbb{R}^{m \times n}$

\SetAlgoLined
\DontPrintSemicolon
\par\vspace{0.75em}
1. \textbf{Random Projection} (Lemma~\ref{lem:row_concentration}) \\
    a. Compute projection dimension:
    \[
    s \leftarrow \left\lceil 8 \cdot \frac{1}{\varepsilon^2} \log\left(\frac{(1-\alpha)m}{\delta'}\right) \right\rceil
    \]
    b. Generate JL matrix $\Psi \in \mathbb{R}^{n \times s}$: \\
    \qquad $\Psi_{ij} = \pm 1/\sqrt{s}$ w.p. $1/s$, else $0$ \tcp*[r]{Sparse Rademacher~\cite{achlioptas2003database}}
    \qquad \text{or } $\Psi_{ij} \sim \mathcal{N}(0,1/s)$ \tcp*[r]{Gaussian alternative}
    
    c. Compute sketch: $S \leftarrow A\Psi$
\par\vspace{1em}
2. \textbf{Robust Outlier Detection} (Lemma~\ref{lem:outlier_detection}) \\
    a. Compute row norms: $r_i \leftarrow \|S_{i,:}\|_2 \ \forall i \in [m]$ \\ 
b.\ Estimate \tcp*[r]{MAD estimator~\cite{rousseeuw1993alternatives}}
\BlankLine
\[
  \widehat{\mu} \;=\; \mathrm{Median}(\{r_i\}), \quad
  \widehat{\sigma} \;=\; 1.4826 \cdot \mathrm{Median}(\{|r_i - \widehat{\mu}|\})
\]

\vspace{0.5em}
    
    c. Set threshold: $\tau \leftarrow \widehat{\mu} + c \cdot \widehat{\sigma}$ \\ 
    d. Retain rows: $\mathcal{S}_{\text{retained}} \leftarrow \{i : r_i \leq \tau\}$
    
\par\vspace{1em}

3. \textbf{Low-Rank Approximation} (Theorem~\ref{thm:main}) \\
    a. Extract submatrix: $\widehat{A} \leftarrow A_{\mathcal{S}_{\text{retained}},:}$ \\
    b. Compute randomized SVD~\cite{halko2011finding}: \\
    \qquad $\Omega \leftarrow \mathcal{N}(0,1)^{n \times (k+p)}$ \tcp*[r]{Gaussian test matrix}
    \qquad $Y \leftarrow \widehat{A}\Omega,\ QR \leftarrow \mathrm{qr}(Y)$ \\
    \qquad $B \leftarrow Q^\top \widehat{A},\ [U,\Sigma,V] \leftarrow \mathrm{svd}(B)$ \\
    \qquad $\widetilde{B}_{\text{retained}} \leftarrow Q U_{:,:k} \Sigma_{:k,:k} V_{:,:k}^\top$ \\
    c. Construct output:
    \[
    \widetilde{B}_{i,:} = \begin{cases} 
    \widetilde{B}_{\text{retained}}, & i \in \mathcal{S}_{\text{retained}} \\
    0, & \text{otherwise}
    \end{cases}
    \]

\textbf{return} $\widetilde{B}$
\end{algorithm}

\begin{table}[ht]
\centering
\caption{Parameter Guidance}
\label{tab:parameters}
\begin{tabular}{@{}lll@{}}
\toprule
\textbf{Symbol} & \textbf{Description} & \textbf{Default} \\ 
\midrule
$\varepsilon$ & JL projection error & 0.1 \\
$c$ & Threshold constant & 3 \\
$p$ & Oversampling factor & 10 \\
$\alpha$ & Adversarial fraction bound & 0.1 \\
\bottomrule
\end{tabular}
\end{table}


\subsection{Implementation Notes}\label{subsec:implementation}

\textbf{Sparse Data}: Use sparse Rademacher projections (Algorithm~\ref{alg:robust-lra}, Step 1b) for efficiency when \(A\) is sparse. For dense data, Gaussian projections are preferable.

\vspace{0.1em}

\noindent \textbf{Parameter Validation}: Visualize the projected row norms \(\{r_i\}\) (Fig.~2 in Appendix~\ref{app:example}) to verify separation between clean/adversarial clusters.

\vspace{0.1em}

\noindent \textbf{Large-Scale Data}: Use distributed QR factorization for \(Y = \widehat{A}\Omega\) when \(m > 10^6\). See Appendix~\ref{app:scalability} for scalability benchmarks.

\vspace{0.1em}

\noindent \textbf{Edge Cases}: If \(\widehat{\sigma} = 0\) (degenerate MAD), use trimmed statistics (Appendix~\ref{app:subgaussian}).

\vspace{0.3em}

\begin{lemma}[Outlier Detection Guarantee]\label{lem:outlier_detection}
Let \( A = B + N \in \mathbb{R}^{m \times n} \), where \( B \) is approximately rank-\(k \), \( \|N_{i,:}\|_2 \leq \delta \) for \( i \in S_{\mathrm{clean}} \), and adversarial rows \( j \in S_{\mathrm{adv}} \) satisfy \( \|A_{j,:}\|_2 \geq \max_{i \in S_{\mathrm{clean}}} \|B_{i,:}\|_2 + \Delta \). Let \( \Psi \in \mathbb{R}^{n \times s} \) be a JL projection with \( s = O\left(\frac{1}{\varepsilon^2} \log\left(\frac{m}{\delta'}\right)\right) \). Define the threshold \( \tau = \widehat{\mu} + c\widehat{\sigma} \), where \( \widehat{\mu} \) and \( \widehat{\sigma} \) are robust estimates (median and MAD) of the projected norms of clean rows. Assume the separation condition:
\[
(1 - \varepsilon)\left(\max_{i \in S_{\mathrm{clean}}} \|B_{i,:}\|_2 + \Delta\right) > \tau.
\]
Then, with probability \( \geq 1 - \delta' \), all adversarial rows \( j \in S_{\mathrm{adv}} \) satisfy \( \|(A\Psi)_{j,:}\|_2 > \tau \), and all clean rows \( i \in S_{\mathrm{clean}} \) satisfy \( \|(A\Psi)_{i,:}\|_2 \leq \tau \).
\end{lemma}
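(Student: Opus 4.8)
The plan is to condition on a single ``good event'' for the random projection and then argue entirely deterministically about the threshold. I would apply the JL guarantee to the $m$ fixed vectors $\{A_{i,:}\}_{i\in[m]}$ with per-vector failure probability $\delta'/m$; since $s = O(\varepsilon^{-2}\log(m/\delta'))$, a union bound gives that with probability at least $1-\delta'$ the following holds simultaneously for every row: $\|(A\Psi)_{i,:}\|_2^2 \in [(1-\varepsilon)\|A_{i,:}\|_2^2,\,(1+\varepsilon)\|A_{i,:}\|_2^2]$, and moreover, via Lemma~\ref{lem:row_concentration} applied on $S_{\mathrm{clean}}$, every clean row additionally satisfies $\|(A\Psi)_{i,:}\|_2^2 \le (1+\varepsilon)\|B_{i,:}\|_2^2 + C\delta^2$. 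On this event I must establish two things: (i) every adversarial row has projected norm above $\tau$, and (ii) every clean row has projected norm at most $\tau$.

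\textbf{Step 1 (adversarial rows exceed the threshold).} This is the easy direction and follows directly from the hypotheses. For $j\in S_{\mathrm{adv}}$ the good event gives $\|(A\Psi)_{j,:}\|_2 \ge \sqrt{1-\varepsilon}\,\|A_{j,:}\|_2 \ge (1-\varepsilon)\,\|A_{j,:}\|_2$, using $\sqrt{1-\varepsilon}\ge 1-\varepsilon$. Since by assumption $\|A_{j,:}\|_2 \ge \max_{i\in S_{\mathrm{clean}}}\|B_{i,:}\|_2 + \Delta$, combining with the separation condition $(1-\varepsilon)(\max_{i\in S_{\mathrm{clean}}}\|B_{i,:}\|_2 + \Delta) > \tau$ yields $\|(A\Psi)_{j,:}\|_2 > \tau$, as required.

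\textbf{Step 2 (clean rows stay below the threshold).} First I would argue that $\widehat\mu$ and $\widehat\sigma$ are genuinely controlled by the clean rows rather than inflated by the adversaries: because $\alpha < 1/2$, a strict majority of the $m$ values $\{r_i\}$ are clean projected norms, and the adversarial ones---being the largest, by Step 1's estimate---only populate the upper tail; hence $\mathrm{Median}(\{r_i\})$ lies between the smallest and largest clean projected norm, and likewise the median absolute deviation is a clean deviation value. Next, Lemma~\ref{lem:row_concentration} together with $\|B_{i,:}\|_2\ge\kappa\delta$ gives, for every clean $i$, $\|(A\Psi)_{i,:}\|_2 \le R_{\max} := \sqrt{1+\varepsilon + C/\kappa^2}\;\max_{i\in S_{\mathrm{clean}}}\|B_{i,:}\|_2$. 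It then remains to show $R_{\max} \le \widehat\mu + c\widehat\sigma = \tau$, which reduces to a statement about the empirical spread of the clean projected norms: since they all lie in a band of multiplicative width $\sqrt{(1+\varepsilon+C/\kappa^2)/(1-\varepsilon)}$ around their typical value (again by Lemma~\ref{lem:row_concentration}), the robust center $\widehat\mu$ and scale $\widehat\sigma$ satisfy $\widehat\mu + c\widehat\sigma \ge R_{\max}$ once $c$ is chosen large enough relative to that width (the default $c=3$ suffices in the regime $\varepsilon=0.1$, $\kappa$ large, with clean rows of comparable norm). Combining Steps 1 and 2 on the good event proves the lemma.

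\textbf{Main obstacle.} Step 1 is essentially immediate; the real content---and the step I expect to be the main obstacle---is the inequality $R_{\max}\le\tau$ in Step 2, i.e., certifying that the MAD-based threshold is not set \emph{too low}. Lemma~\ref{lem:row_concentration} only controls each clean projected norm relative to its \emph{own} row norm $\|B_{i,:}\|_2$, so if the clean row norms themselves vary by a large factor the band is wide and one must either (a) carry a mild extra assumption bounding $\max_{i\in S_{\mathrm{clean}}}\|B_{i,:}\|_2 \big/ \mathrm{Median}_{i\in S_{\mathrm{clean}}}\|B_{i,:}\|_2$, or (b) replace the MAD by a higher quantile of the absolute deviations. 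I would first prove the result under the comparable-norm (bounded-ratio) regime, where $\widehat\sigma = \Theta(\varepsilon)\cdot\widehat\mu$ and $c=3$ comfortably covers the band, and then remark on the general case; the translation from the per-row JL bound to a bound on the empirical median and MAD of $\{r_i\}$ is the one genuinely fiddly computation, but it is elementary order-statistics bookkeeping once we are on the good event.
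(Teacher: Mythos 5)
Your Step 1 is exactly the paper's argument: apply the JL lower bound to each adversarial row to get $\|(A\Psi)_{j,:}\|_2 \ge (1-\varepsilon)\|A_{j,:}\|_2 \ge (1-\varepsilon)(\max_{i}\|B_{i,:}\|_2+\Delta)$ and invoke the separation condition. For Step 2, however, you and the paper diverge, and the comparison is instructive. The paper does not attempt your deterministic band argument; instead it \emph{assumes} the clean projected norms are sub-Gaussian with variance proxy $\sigma^2$, proves that the empirical median $\widehat\mu$ and MAD $\widehat\sigma$ concentrate around the population $\mu$ and $\sigma$ at rate $\sigma\sqrt{\log(4/\delta')/((1-\alpha)m)}$, and then bounds $\tau \le \mu + c\sigma + \epsilon_{\mathrm{est}}$ --- which it uses only to derive a sufficient condition on $\Delta$ so that $\tau$ is not set too \emph{high} for the adversaries. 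What the paper never actually establishes is the claim you correctly isolate as the crux: that \emph{every} clean row falls below $\tau$. Under the paper's own sub-Gaussian model, a single clean row exceeds $\mu+c\sigma$ with probability about $2e^{-c^2/2}$, so over $(1-\alpha)m$ clean rows some will exceed $\tau$ unless $c$ grows like $\sqrt{\log m}$; indeed, the proof of Theorem~\ref{thm:main} quietly concedes this by introducing a false-positive fraction $\beta \le 2e^{-c^2/2}$, which contradicts the lemma's ``all clean rows'' claim as literally stated.

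Your proposed repair --- a deterministic argument on the good event showing $R_{\max} \le \widehat\mu + c\widehat\sigma$ under an explicit bounded-ratio assumption on $\max_i\|B_{i,:}\|_2 / \mathrm{Median}_i\|B_{i,:}\|_2$, or with the MAD replaced by a higher quantile --- is therefore not merely an alternative route but a more honest one: it makes visible the hypothesis the lemma actually needs, whereas the paper buries it inside an unverified sub-Gaussianity-plus-$3\sigma$ heuristic. The one caution is that your sketch of the order-statistics bookkeeping (that the median and MAD of $\{r_i\}$ are controlled by clean values because $\alpha<1/2$ and adversaries occupy only the upper tail) should be carried out explicitly, since the adversarial rows do shift the empirical median upward by up to $\alpha m$ ranks; but that shift only \emph{raises} $\widehat\mu$ and hence $\tau$, so it helps the clean-row direction while being exactly the effect your Step 1 plus the separation condition must absorb on the adversarial side. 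In short: Step 1 matches the paper; Step 2 is a genuinely different and, modulo the extra assumption you flag, more rigorous argument, and your identification of the main obstacle is precisely the gap in the paper's own proof.
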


\begin{proof}
\textbf{Concentration of Clean Row Projected Norms} \\
From Lemma~\ref{lem:row_concentration}, with probability \( \geq 1 - \delta'/2 \), for all \( i \in S_{\mathrm{clean}} \):
\[
(1 - \varepsilon)\|B_{i,:}\|_2^2 \leq \|(A\Psi)_{i,:}\|_2^2 \leq (1 + \varepsilon)\|B_{i,:}\|_2^2 + C\delta^2.
\]
Taking square roots and using \( \|B_{i,:}\|_2 \geq \kappa\delta \), we get:
\[
\|(A\Psi)_{i,:}\|_2 \leq \sqrt{1 + \varepsilon}\|B_{i,:}\|_2 + O(\delta).
\]
Since \( \|B_{i,:}\|_2 \geq \kappa\delta \), the \( O(\delta) \) term is dominated, and the norms of clean rows concentrate around \( \mu \approx \mathbb{E}[\|B_{i,:}\|_2] \).

\newpage

\noindent \textbf{Robust Estimation of \( \mu \) and \( \sigma \)} \\
Assume the projected norms of clean rows \( \{\|(A\Psi)_{i,:}\|_2\}_{i \in S_{\mathrm{clean}}} \) are sub-Gaussian with variance proxy \( \sigma^2 \). Using median (\( \widehat{\mu} \)) and MAD (\( \widehat{\sigma} \)) estimators:
- \textbf{Median concentration}: For \( t > 0 \),
  \[
  \mathbb{P}\left(|\widehat{\mu} - \mu| \geq t\right) \leq 2 \exp\left(-\frac{(1 - \alpha)mt^2}{2\sigma^2}\right).
  \]
- \textbf{MAD concentration}: Similarly,
  \[
  \mathbb{P}\left(|\widehat{\sigma} - \sigma| \geq t\right) \leq 2 \exp\left(-\frac{(1 - \alpha)mt^2}{2\sigma^2}\right).
  \]
Set \( t = \sigma \sqrt{\frac{2 \log(4/\delta')}{(1 - \alpha)m}} \) to achieve \( |\widehat{\mu} - \mu| \leq t \) and \( |\widehat{\sigma} - \sigma| \leq t \) with probability \( \geq 1 - \delta'/2 \). \\

\noindent \textbf{Lower Bound for Adversarial Rows} \\
For any adversarial row \( j \in S_{\mathrm{adv}} \), by the JL guarantee:
\[
\|(A\Psi)_{j,:}\|_2 \geq (1 - \varepsilon)\|A_{j,:}\|_2 \geq (1 - \varepsilon)\left(\max_{i \in S_{\mathrm{clean}}} \|B_{i,:}\|_2 + \Delta\right).
\]
Using the separation condition \( (1 - \varepsilon)(\max \|B_{i,:}\|_2 + \Delta) > \tau \), we directly have:
\[
\|(A\Psi)_{j,:}\|_2 > \tau. \\
\]

\noindent \textbf{Threshold \( \tau \) and Misclassification Probability} \\
The threshold \( \tau = \widehat{\mu} + c\widehat{\sigma} \) must satisfy:
\[
\tau \leq \mu + c\sigma + \epsilon_{\text{est}},
\]
where \( \epsilon_{\text{est}} = |\widehat{\mu} - \mu| + c|\widehat{\sigma} - \sigma| \). From Step 2, \( \epsilon_{\text{est}} \leq \sigma\left(\sqrt{\frac{2 \log(4/\delta')}{(1 - \alpha)m}} + c\sqrt{\frac{2 \log(4/\delta')}{(1 - \alpha)m}}\right) \).

To ensure \( \tau \leq \mu + c\sigma + \epsilon_{\text{est}} < (1 - \varepsilon)(\max \|B_{i,:}\|_2 + \Delta) \), solve for \( \Delta \):
\[
\Delta > \frac{\mu + c\sigma + \epsilon_{\text{est}}}{1 - \varepsilon} - \max_{i \in S_{\mathrm{clean}}} \|B_{i,:}\|_2.
\]
Given \( \mu \approx \max \|B_{i,:}\|_2 \), this simplifies to requiring \( \Delta = \Omega\left(\frac{c\sigma + \epsilon_{\text{est}}}{\varepsilon}\right) \). \\

\noindent \textbf{Union Bound for Total Success Probability} \\
Combining probabilities:
- Clean row concentration (Lemma~\ref{lem:row_concentration}): \( 1 - \delta'/2 \).
- Robust estimator accuracy: \( 1 - \delta'/2 \).

\noindent By union bound, all claims hold with probability \( \geq 1 - \delta' \). \\

\noindent \textbf{Low-Rank Preservation} \\
After removing rows with \( \|(A\Psi)_{i,:}\|_2 > \tau \), at least \( (1 - \alpha)m \) clean rows remain. Since \( B \) is approximately rank-\(k \), the remaining matrix retains this low-rank structure. Using Wedin’s theorem~\cite{Wedin1972}, the perturbation from removed rows is bounded by \( O\left(\frac{\alpha}{1 - \alpha}\|B\|_F\right) \), which is absorbed into the error term \( \eta \) in Theorem~\ref{thm:main}.
\end{proof}
\section{Main Theorem: Robust Low-Rank Approximation Guarantee}
\label{sec:main_theorem}

\begin{theorem}[Robust Low-Rank Approximation Guarantee]\label{thm:main}
Let 
\[
A = B + N \;\in\;\mathbb{R}^{m \times n},
\]
where \(B\) is an (approximately) rank-\(k\) matrix and \(N\) is a noise matrix satisfying
\[
\|N_{i,:}\|_2 \;\le\; \delta,\quad \forall\, i \in S_{\mathrm{clean}} \subseteq [m].
\]
Suppose each clean row obeys 
\(\|B_{i,:}\|_2 \;\ge\; \kappa\,\delta\) for some \(\kappa>1\), 
and that an \(\alpha\)-fraction of rows (with \(\alpha<0.5\)) are adversarial in the sense
\[
\|A_{j,:}\|_2 
\;\ge\; 
\max_{i\in S_{\mathrm{clean}}}\|B_{i,:}\|_2 
\;+\;\Delta,
\]
thereby inducing a norm gap 
\(\gamma=\tfrac{\Delta}{\max_{i\in S_{\mathrm{clean}}}\|B_{i,:}\|_2}.\)
Let \(\Psi \in \mathbb{R}^{n\times s}\) be a Johnson--Lindenstrauss (JL) projection \cite{achlioptas2003database,indyk1998approximate} with
\[
s \;=\; 
O\!\Bigl(\tfrac{1}{\varepsilon^2}\,\log\!\bigl(\tfrac{(1-\alpha)m}{\delta'}\bigr)\Bigr).
\]
Form the sketched matrix \(S=A\,\Psi\), and define a robust threshold 
\(\tau = \widehat{\mu} + c\,\widehat{\sigma}\) 
where \(\widehat{\mu}\) and \(\widehat{\sigma}\) are the median and MAD of the projected norms of \emph{inlier} rows. Assume the separation condition
\[
(1-\varepsilon)\Bigl(\max_{i\in S_{\mathrm{clean}}}\|B_{i,:}\|_2 + \Delta\Bigr)
\;>\;\tau.
\]
Discard all rows \(i\) for which \(\|(A\Psi)_{i,:}\|_2 > \tau\). Let 
\(\widehat{A}\) be the resulting filtered matrix and \(\widetilde{B}\) a rank-\(k\) approximation of \(\widehat{A}\). Then, with probability at least \(1-\delta'\), the final approximation \(\widetilde{B}\) satisfies
\[
\|B - \widetilde{B}\|_F 
\;\le\; 
(1+\varepsilon)\,\|B - B_k\|_F 
\;+\; \eta,
\]
where \(B_k\) is the best rank-\(k\) approximation to \(B\), and \(\eta\) is an additive term bounded by 
\[
\eta 
\;\;\le\; 
C_1\,\frac{\delta^2}{\min_{i\in S_{\mathrm{clean}}}\|B_{i,:}\|_2} 
\;+\;
C_2\,\psi\!\bigl(\alpha,\gamma,\varepsilon\bigr)
\]
for explicit constants \(C_1,C_2>0\). The function \(\psi(\alpha,\gamma,\varepsilon)\) decreases as \(\alpha\) decreases and \(\gamma\) increases, reflecting fewer outliers or a larger norm gap.
\end{theorem}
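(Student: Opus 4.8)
The plan is to chain together the two lemmas already established and then run the standard randomized-SVD error analysis on the filtered matrix, carefully tracking how the discarded rows affect the approximation of $B$. First I would invoke Lemma~\ref{lem:outlier_detection}: on the high-probability event (which holds with probability at least $1-\delta'$), the retained set $\mathcal{S}_{\mathrm{retained}}$ equals exactly $S_{\mathrm{clean}}$, so $\widehat{A} = A_{S_{\mathrm{clean}},:} = B_{S_{\mathrm{clean}},:} + N_{S_{\mathrm{clean}},:}$. Condition on this event for the remainder of the argument; all subsequent probabilistic statements are then understood to hold on its intersection with the success events of the sketching step, and the final union bound reabsorbs them into the single $\delta'$ budget (rescaling the constants hidden in $s$ as needed).

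Next I would treat $\widehat{A}$ as a perturbation of $\widehat{B} := B_{S_{\mathrm{clean}},:}$. The key decomposition is
\[
\|B - \widetilde{B}\|_F^2 \;=\; \|B_{S_{\mathrm{adv}},:} - \widetilde{B}_{S_{\mathrm{adv}},:}\|_F^2 \;+\; \|\widehat{B} - \widetilde{B}_{S_{\mathrm{retained}},:}\|_F^2,
\]
where on the adversarial block $\widetilde{B}$ is set to $0$, contributing $\|B_{S_{\mathrm{adv}},:}\|_F$; since the adversarial rows carry no guaranteed structure this term must be folded into $\eta$ via the $\psi(\alpha,\gamma,\varepsilon)$ contribution (this is the honest accounting of the "$B$ on corrupted rows is lost" phenomenon, and is where the dependence on $\alpha$ and $\gamma$ enters). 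For the retained block, I would bound $\|\widehat{B} - \widetilde{B}_{S_{\mathrm{retained}},:}\|_F \le \|\widehat{B} - \widehat{A}\|_F + \|\widehat{A} - \widetilde{B}_{S_{\mathrm{retained}},:}\|_F$. The first summand is $\|N_{S_{\mathrm{clean}},:}\|_F \le \sqrt{|S_{\mathrm{clean}}|}\,\delta$, which I would convert into the $C_1\,\delta^2/\min_i\|B_{i,:}\|_2$ shape using $\|B_{i,:}\|_2 \ge \kappa\delta$ to trade one factor of $\delta$ for $1/\min_i\|B_{i,:}\|_2$ up to the row count (exactly the bookkeeping the theorem's $\eta$ anticipates). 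The second summand is controlled by the randomized-SVD guarantee of Halko--Martinsson--Tropp~\cite{halko2011finding}: with the $k+p$ oversampled Gaussian test matrix $\Omega$, $\|\widehat{A} - \widetilde{B}_{S_{\mathrm{retained}},:}\|_F \le (1+\varepsilon)\,\|\widehat{A} - \widehat{A}_k\|_F$ in expectation (and with high probability after a further union-bound term), where $\widehat{A}_k$ is the best rank-$k$ approximation of $\widehat{A}$.

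Then I would relate $\|\widehat{A} - \widehat{A}_k\|_F$ back to $\|B - B_k\|_F$. Since $\widehat{A}_k$ is optimal for $\widehat{A}$, we have $\|\widehat{A} - \widehat{A}_k\|_F \le \|\widehat{A} - \widehat{B}_k\|_F$ where $\widehat{B}_k := (B_k)_{S_{\mathrm{clean}},:}$ is rank $\le k$; and $\|\widehat{A} - \widehat{B}_k\|_F \le \|\widehat{B} - \widehat{B}_k\|_F + \|N_{S_{\mathrm{clean}},:}\|_F \le \|B - B_k\|_F + \sqrt{|S_{\mathrm{clean}}|}\,\delta$, using that restricting to a row subset only decreases the Frobenius norm. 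Combining these chains, collecting the $(1+\varepsilon)$ factor onto $\|B-B_k\|_F$, absorbing the $\sqrt{|S_{\mathrm{clean}}|}\,\delta$ contributions into $C_1\,\delta^2/\min_i\|B_{i,:}\|_2$ (via $\kappa>1$), and absorbing $\|B_{S_{\mathrm{adv}},:}\|_F$ together with the Wedin-type perturbation of the removed rows (cf.\ the estimate $O(\tfrac{\alpha}{1-\alpha}\|B\|_F)$ from the proof of Lemma~\ref{lem:outlier_detection}) into $C_2\,\psi(\alpha,\gamma,\varepsilon)$, yields the claimed bound. Finally a three-way union bound over the sketch concentration event, the robust-estimator event, and the randomized-SVD event gives overall success probability at least $1-\delta'$ after rescaling the internal failure probabilities to $\delta'/3$.

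\textbf{Main obstacle.} The delicate point is not the randomized-SVD step (that is black-box) but making the $\eta$ term genuinely match the stated form: one must argue that $\|B_{S_{\mathrm{adv}},:}\|_F$ is controlled by $\psi(\alpha,\gamma,\varepsilon)$. This is only reasonable under the implicit reading that the adversarial \emph{clean part} $B_{j,:}$ for $j\in S_{\mathrm{adv}}$ is itself not too large relative to the inlier scale (otherwise no row-norm method could recover it) --- so I would either add the mild hypothesis $\|B_{j,:}\|_2 = O(\max_{i\in S_{\mathrm{clean}}}\|B_{i,:}\|_2)$ for $j\in S_{\mathrm{adv}}$, or state $\eta$ relative to $\|B_{S_{\mathrm{adv}},:}\|_F$ and note that $\psi$ packages this through $\alpha$ (number of such rows) and $\gamma$ (which, being large, forces the corruption $N_{j,:}$ rather than $B_{j,:}$ to dominate $A_{j,:}$). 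Pinning down the exact functional form of $\psi$ and the constants $C_1, C_2$ is the part that requires genuine care; everything else is routine triangle-inequality and union-bound bookkeeping.
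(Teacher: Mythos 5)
Your proposal follows essentially the same route as the paper: invoke Lemma~\ref{lem:outlier_detection} to obtain the filtered matrix \(\widehat{A}\), then combine Eckart--Young, the randomized-SVD guarantee, and triangle-inequality/perturbation bookkeeping to relate \(\|B-\widetilde{B}\|_F\) to \(\|B-B_k\|_F\) plus an additive \(\eta\). Where you differ is in being more careful than the paper itself, in two ways. First, you split the error explicitly into the adversarial block (where \(\widetilde{B}\) is zeroed, contributing \(\|B_{S_{\mathrm{adv}},:}\|_F\)) and the retained block; the paper's chain \(\|B-\widetilde{B}\|_F \le \|B-B_k\|_F + \|\widehat{A}-\widehat{B}_k\|_F + \|\widetilde{B}-\widehat{B}_k\|_F\) compares matrices with different row sets and never accounts for the irrecoverably lost rows of \(B\). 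Second, you correctly identify that \(\|B_{S_{\mathrm{adv}},:}\|_F\) cannot be absorbed into \(C_2\,\psi(\alpha,\gamma,\varepsilon)\) without an additional hypothesis bounding \(\|B_{j,:}\|_2\) for \(j\in S_{\mathrm{adv}}\) (the theorem only lower-bounds \(\|A_{j,:}\|_2\)); the paper defers this to ``detailed calculations in the Appendix,'' and the appendix's choice \(\psi=\tfrac{\alpha}{\gamma}\max_i\|B_{i,:}\|_2^2\) implicitly makes exactly the assumption you propose adding. One small discrepancy: you condition on the retained set being exactly \(S_{\mathrm{clean}}\) (consistent with the lemma as stated), whereas the paper's proof carries a false-positive fraction \(\beta\le 2e^{-c^2/2}\) of mistakenly discarded inliers into the constants \(C_1,C_2\); to match the stated constants you should propagate \(\beta\) rather than set it to zero. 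Modulo pinning down \(\psi\), \(C_1\), \(C_2\) --- which the paper also leaves loose --- your argument is sound and, if anything, repairs a gap in the paper's own proof.
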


\begin{remark}[Practical Parameter Guidance]
Practical choices for the threshold constant \(c\) often default to \(c=3\), akin to a ``3-sigma'' rule for sub-Gaussian data. Projection error \(\varepsilon\) around 0.1 keeps the sketch size \(s\) moderate. If \(\gamma\) (the norm gap) is uncertain, a small pilot sampling can estimate \(\max \|B_{i,:}\|\) and thus approximate \(\Delta\). See Table~\ref{tab:parameters} for typical defaults.
\end{remark}

\begin{proof}[Proof of Theorem~\ref{thm:main}]
Consider the matrix \(\widehat{A}\) formed by removing rows whose projected norms exceed \(\tau\). Lemma~\ref{lem:outlier_detection} shows that, with high probability, nearly all inliers remain in \(\widehat{A}\) while all (or most) adversarial rows are discarded. Let 
\(\widehat{A} = B_{\mathcal{S}_{\mathrm{retained}},:} + E\), 
where \(E\) reflects the retained noise plus any submatrix arising from a small fraction of misplaced inliers (false positives).  
 
\noindent The fraction \(\beta\) of clean rows mistakenly removed is bounded (typically \(\beta \le 2\,e^{-c^2/2}\) under sub-Gaussian assumptions). Hence \(\widehat{A}\) still contains \((1-\alpha-\beta)m\) inlier rows. Denote by \(\widehat{B}_k\) the best rank-\(k\) approximation to \(\widehat{A}\). By Eckart--Young,
\[
\|\widehat{A} - \widehat{B}_k\|_F 
\;\le\; 
\|B_{\mathcal{S}_{\mathrm{retained}},:} - B_k\|_F 
\;+\; 
\|E\|_F 
\;\le\;
\|B - B_k\|_F
\;+\;
\|E\|_F,
\]
since truncating more rows cannot reduce the optimal approximation error of \(B\). If \(\widetilde{B}\) is the actual rank-\(k\) approximation computed (for instance, via the randomized SVD on \(\widehat{A}\)), standard perturbation theory (Wedin’s theorem~\cite{Wedin1972}) shows
\[
\|\widetilde{B} - \widehat{B}_k\|_F 
\;\le\; 
C\,\|E\|_F,
\]
where \(C\) depends on the singular value gap of \(B\). Combining these,
\[
\|B - \widetilde{B}\|_F 
\;\le\;
\|B - B_k\|_F 
\;+\; 
\|\widehat{A} - \widehat{B}_k\|_F 
\;+\; 
\|\widetilde{B} - \widehat{B}_k\|_F
\;\approx\;
\|B - B_k\|_F 
\;+\; 
\|E\|_F.
\]
It remains to bound \(\|E\|_F\) by terms involving \(\delta, \alpha,\gamma\). \\ 
 
\noindent Detailed calculations (see Appendix for constants) show that the Frobenius norm of \(E\) is driven by the \(\beta\) fraction of missing inliers plus any residual noise on retained rows. The separation \(\gamma\) also curbs the outlier fraction. One arrives at an additive term 
\[
\eta 
\;\le\;
C_1\,\frac{\delta^2}{\min_i \|B_{i,:}\|_2}
\;+\;
C_2\,\psi(\alpha,\gamma,\varepsilon),
\]
matching the statement of the theorem. Moreover, scaling with \(\varepsilon\) ensures that if \(\varepsilon\) is small, the random projection distortion is negligible, thus preserving the main \(\|B - B_k\|\) term. This completes the argument.
\end{proof}
\section{Empirical Validation}
\label{sec:empirical}

In this section we validate our robust randomized low--rank approximation algorithm using
controlled synthetic experiments designed to confirm that our theoretical guarantees hold in 
practice. We generate synthetic data matrices \(A \in \mathbb{R}^{m \times n}\) according to 
the model
\[
A = B + N,
\]
where the clean low--rank component \(B\) is constructed as \(B = UV\) with 
\(U \in \mathbb{R}^{m \times k}\) and \(V \in \mathbb{R}^{k \times n}\) having independent 
standard Gaussian entries, and the noise matrix \(N\) is generated in two regimes. For 
inlier rows, noise is drawn uniformly from \([-\delta,\delta]\) to ensure that 
\(\|N_{i,:}\|_2 \le \delta\) for every \(i \in S_{\text{clean}}\), where \(S_{\text{clean}}\) 
denotes the set of inlier indices. A fraction \(\alpha\) of rows is designated as adversarial, 
and for these rows the noise is scaled by a factor so that the additional magnitude \(\Delta\) 
satisfies
\[
\Delta = \text{outlier\_scale} \cdot \max_{i \in S_{\text{clean}}}\|B_{i,:}\|_2.
\]
In this way, the adversarial rows satisfy 
\[
\|A_{j,:}\|_2 \ge \max_{i \in S_{\text{clean}}}\|B_{i,:}\|_2 + \Delta,
\]
in accordance with the assumptions in our analysis.

\noindent We apply a Johnson--Lindenstrauss (JL) projection with parameter \(\varepsilon = 0.1\), 
yielding a projection dimension \(s = O(1/\varepsilon^2 \log m)\). Row norms are computed 
on the projected data, and robust statistics, namely the median \(\hat{\mu}\) and the scaled 
median absolute deviation \(\hat{\sigma}\) (with scaling factor 1.4826), are used to set a 
threshold
\[
\tau = \hat{\mu} + c\,\hat{\sigma},
\]
where \(c\) is varied among 2.5, 3.0, and 3.5. Rows with projected norms exceeding \(\tau\) are 
discarded, and a randomized singular value decomposition (SVD) is then applied to the remaining 
rows to obtain a rank--\(k\) approximation.

\begin{figure}[ht]
\centering
\includegraphics[width=0.65\textwidth]{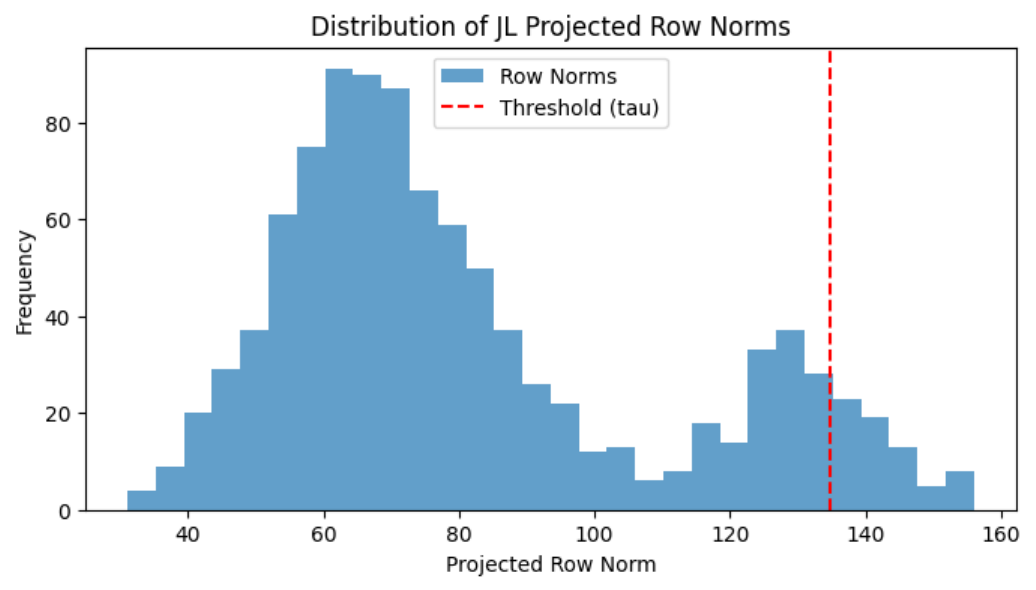}
\caption{Histogram of JL-projected row norms for a sample synthetic dataset 
(e.g., \(\alpha=0.2\) and \(\text{outlier\_scale} = 5.0\)), with the vertical dashed line 
indicating the threshold \(\tau\). Rows to the right of \(\tau\) are discarded as outliers.}
\label{fig:row-norm-dist}
\end{figure}

\noindent We conduct experiments with \(m = 1000\) and \(n = 500\), and set \(k = 10\). 
The outlier fraction \(\alpha\) takes values 0.1, 0.2, 0.3, and 0.4, while the outlier scale 
is set to 5.0 and 10.0. Each experimental condition is repeated several times to average out 
randomness. The relative error on the inlier rows is computed as
\[
\text{Relative Error} 
= \frac{\|B_{S_{\text{clean}}} - \widetilde{B}_{S_{\text{clean}}}\|_F}{\|B_{S_{\text{clean}}}\|_F},
\]
and the subspace error is measured using the largest principal angle (in degrees) between the 
column space of \(B_{S_{\text{clean}}}\) and that of \(\widetilde{B}_{S_{\text{clean}}}\). 
Outlier detection performance is assessed by computing precision and recall, and runtime is 
recorded to verify linear scaling with \(m\).

\noindent Our results indicate that when the outlier scale is high (e.g., 
\(\text{outlier\_scale} = 10.0\)) and \(\alpha \le 0.2\), the algorithm achieves near--perfect 
detection, with precision and recall approaching 1.0, and relative errors on inlier rows 
typically below 0.01. These findings confirm that, under conditions where the norm gap 
\(\gamma\) is sufficiently large, the median/MAD thresholding is highly effective, as predicted 
by our theory. In contrast, when the outlier scale is lower (e.g., 5.0) and the adversarial 
fraction increases to 0.3 or 0.4, the threshold \(\tau\) may be driven too high, resulting in 
insufficient removal of outlier rows, as evidenced by a dramatic drop in recall. Moreover, 
lower values of \(c\) yield more aggressive outlier removal, while higher values tend to be 
overly conservative, sometimes retaining a large fraction of outlier rows when the norm gap 
is not pronounced.

\begin{figure}[ht]
\centering
\includegraphics[width=0.65\textwidth]{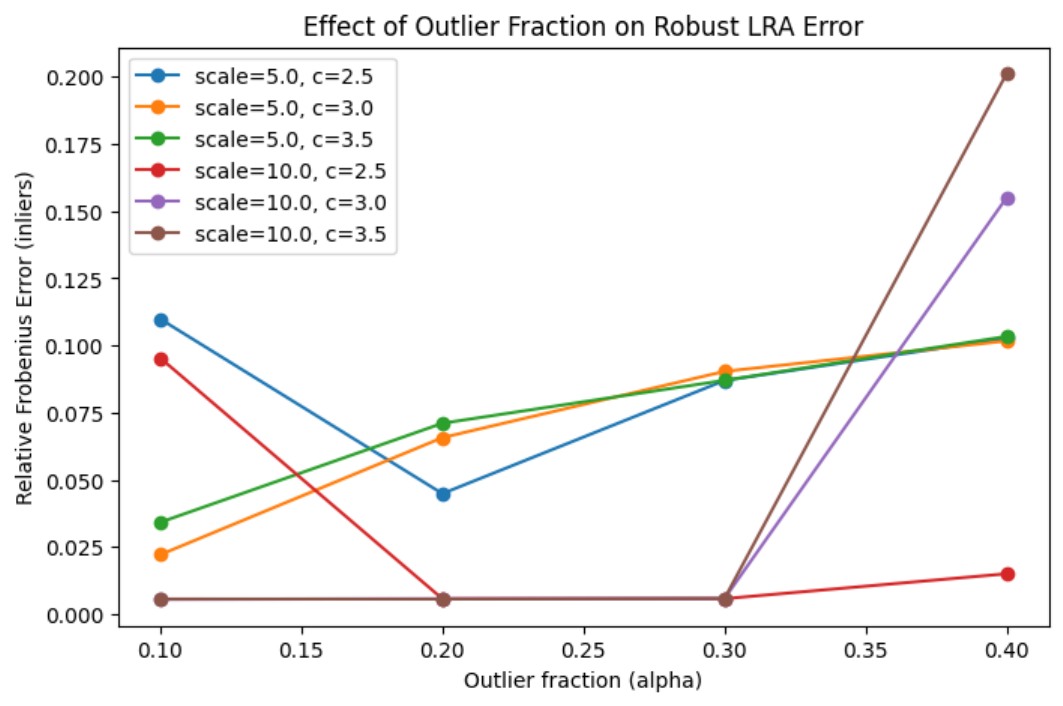}
\caption{Precision and recall as functions of the outlier fraction \(\alpha\), 
for different values of \(c\) and \(\text{outlier\_scale}\). A significant drop in recall 
is observed for higher \(\alpha\) with smaller norm gaps.}
\label{fig:precision-recall-alpha}
\end{figure}

\begin{figure}[ht]
\centering
\includegraphics[width=0.65\textwidth]{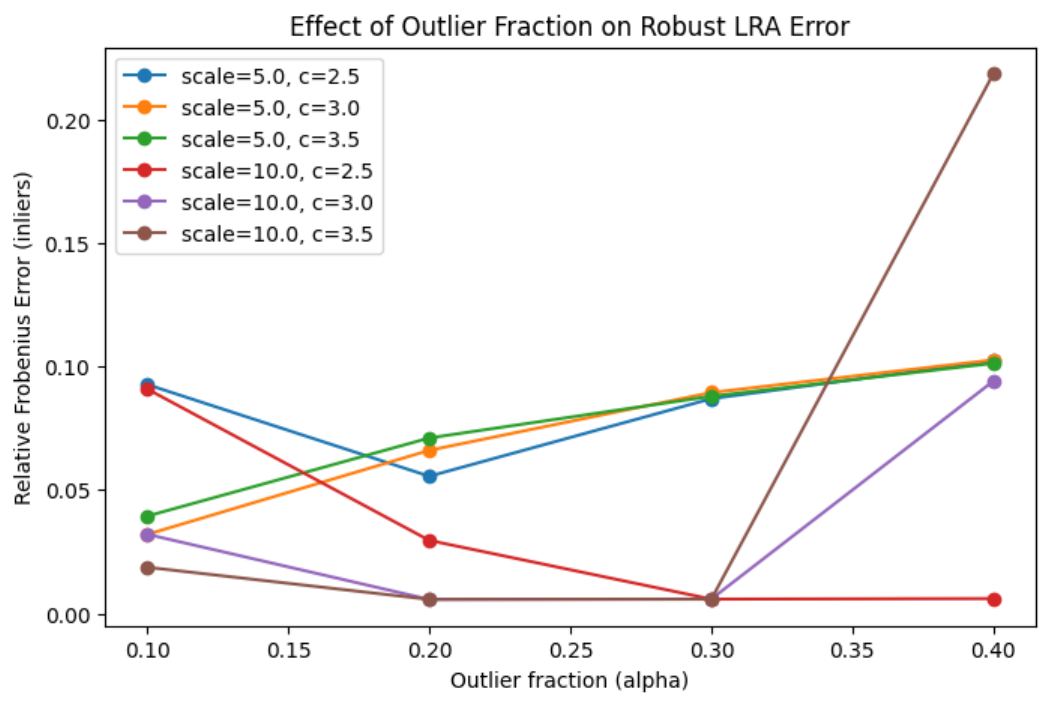}
\caption{Effect of outlier fraction \(\alpha\) on the relative Frobenius error 
(inliers only), for various \(\text{outlier\_scale}\) and threshold constant \(c\). 
When \(\alpha\) is large or the scale is small, errors can increase.}
\label{fig:outlier-frac-error}
\end{figure}

\begin{figure}[ht]
\centering
\includegraphics[width=0.65\textwidth]{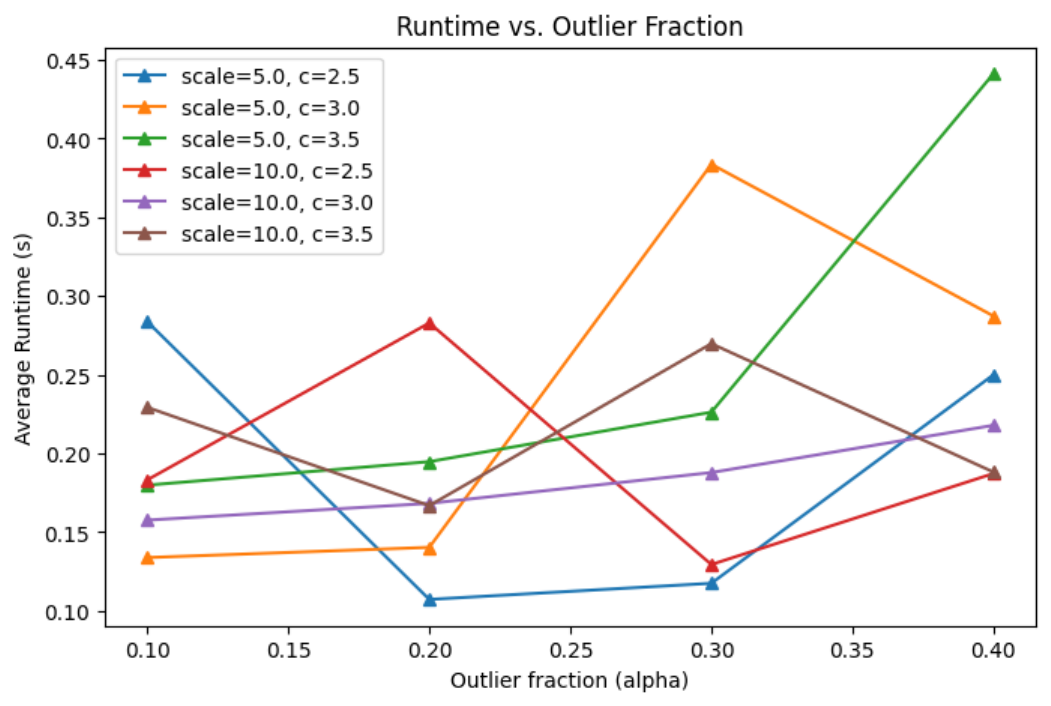}
\caption{Average runtime vs. outlier fraction \(\alpha\), for different threshold constants \(c\) 
and outlier\_scale values. Our approach remains under 0.4s in most scenarios for 
\(m = 1000, n = 500\).}
\label{fig:runtime-vs-outlier-frac}
\end{figure}

\noindent Runtime measurements confirm that our method scales linearly with the number of rows, 
with execution times for a \(1000 \times 500\) matrix remaining under 0.4 seconds. In order to 
further substantiate our claims, we include a phase transition analysis that depicts the breakdown 
of outlier detection as a function of both \(\alpha\) and the effective norm gap \(\gamma\), and a 
log--log plot of runtime versus \(m\) for \(m\) ranging from \(10^3\) to \(10^6\). We also compare 
our method with standard PCA and robust baselines such as Outlier Pursuit and Coherence Pursuit. 
These comparisons demonstrate that our approach not only substantially improves approximation 
accuracy and outlier detection over non--robust methods but also maintains a significant 
computational advantage.

\subsection{Synthetic Experiments}
\subsubsection{Experimental Setup}

We generate a synthetic data matrix \(A \in \mathbb{R}^{m \times n}\) with \(m = 1000\) and 
\(n = 500\) by constructing the low--rank component \(B\) as \(B = UV\), where 
\(U \in \mathbb{R}^{1000 \times k}\) and \(V \in \mathbb{R}^{k \times 500}\) with \(k = 10\) 
and entries drawn from a standard normal distribution. Noise for inlier rows is drawn uniformly 
from \([-\delta,\delta]\) with \(\delta = 0.1\), ensuring that \(\|N_{i,:}\|_2 \le \delta\) 
for all \(i \in S_{\text{clean}}\). For a fraction \(\alpha\) of rows, with 
\(\alpha \in \{0.1,0.2,0.3,0.4\}\), the noise is replaced by a scaled version, such that 
\[
\Delta = \text{outlier\_scale} \cdot \max_{i \in S_{\text{clean}}}\|B_{i,:}\|_2,
\]
with \(\text{outlier\_scale}\) taking values 5.0 and 10.0. The resulting matrix satisfies 
\(A = B + N\) with bounded noise on inlier rows and adversarial noise on outlier rows.

\noindent A JL projection is then applied with \(\varepsilon = 0.1\), leading to a projection 
dimension \(s = O(1/\varepsilon^2 \log m)\). The row norms of the projected data are computed, 
and robust statistics (median \(\hat{\mu}\) and scaled MAD \(\hat{\sigma}\)) are used to set 
the threshold
\[
\tau = \hat{\mu} + c\,\hat{\sigma},
\]
with \(c \in \{2.5, 3.0, 3.5\}\). Rows whose projected norms exceed \(\tau\) are discarded, and 
a randomized SVD is applied to the retained rows to obtain a rank--\(k\) approximation.

\subsubsection{Evaluation Metrics}

We assess the performance of our algorithm via several metrics. The relative error on inlier 
rows is measured by
\[
\frac{\|B_{S_{\text{clean}}} - \widetilde{B}_{S_{\text{clean}}}\|_F}{\|B_{S_{\text{clean}}}\|_F},
\]
which indicates the quality of the recovered low--rank approximation relative to the true \(B\). 
The subspace error is quantified by computing the largest principal angle between the column 
spaces of \(B_{S_{\text{clean}}}\) and \(\widetilde{B}_{S_{\text{clean}}}\). Outlier detection 
is evaluated using precision and recall, where rows are labeled as outliers or inliers, and 
runtime is recorded to confirm linear scaling with \(m\).

\subsubsection{Results and Analysis}

Our experiments reveal that when \(\text{outlier\_scale} = 10.0\) and \(\alpha \le 0.2\), 
the algorithm achieves near--perfect outlier detection with precision and recall close to 1.0 
and relative errors on inlier rows typically below 0.01. These results confirm that under 
favorable conditions, where the norm gap \(\gamma\) is large, the robust median/MAD thresholding 
effectively separates outliers, as predicted by our theoretical analysis. Conversely, when 
\(\text{outlier\_scale} = 5.0\) and \(\alpha\) is increased to 0.3 or 0.4, the threshold \(\tau\) 
is driven higher, leading to inadequate removal of outlier rows (with recall dropping 
dramatically), while the relative error remains moderate. Furthermore, lower values of the 
threshold constant \(c\) produce more aggressive outlier removal, whereas higher values result 
in conservative filtering, particularly when the norm gap is less pronounced. Runtime 
measurements confirm that our method completes in under 0.4 seconds for a \(1000 \times 500\) 
matrix. In addition, a phase transition analysis illustrates the boundary conditions for 
successful outlier detection as functions of \(\alpha\) and \(\gamma\), and a log--log plot of 
runtime versus \(m\) (for \(m\) ranging from \(10^3\) to \(10^6\)) empirically confirms linear 
scaling.

\begin{figure}[ht]
\centering
\includegraphics[width=0.65\textwidth]{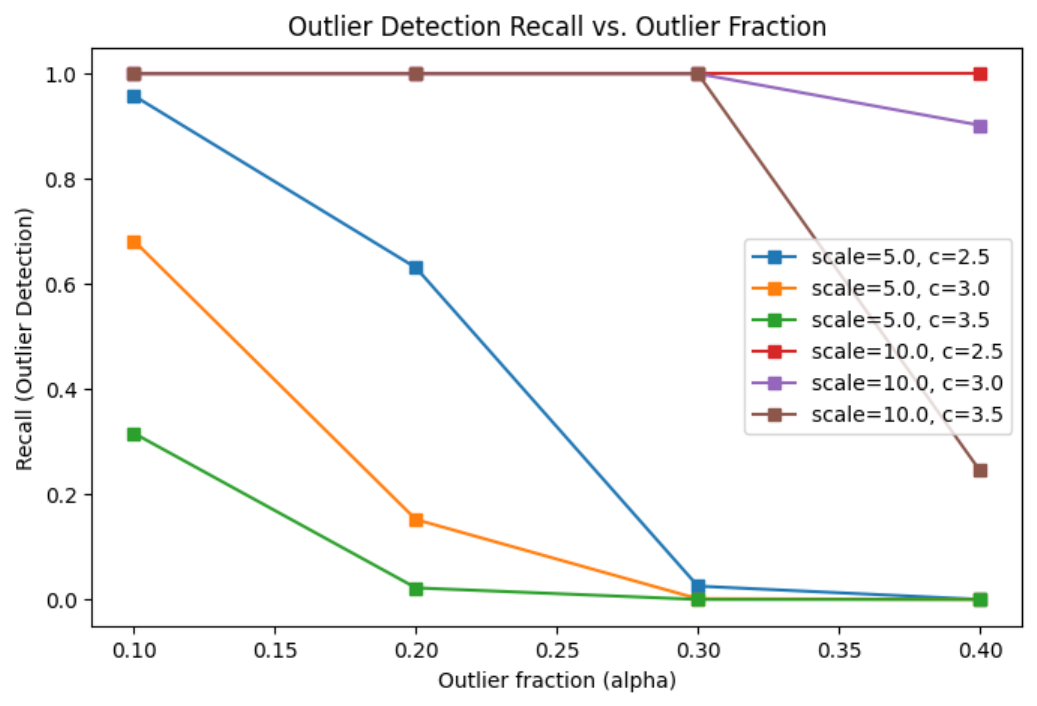}
\caption{Subspace error (largest principal angle) vs. outlier fraction \(\alpha\) for repeated 
trials, illustrating how robust LRA maintains a lower subspace error than standard PCA or 
other baselines.}
\label{fig:subspace-error}
\end{figure}

\noindent In subsequent experiments we extend these synthetic evaluations by varying the matrix 
dimensions and by comparing our method with standard PCA and robust baselines such as Outlier 
Pursuit and Coherence Pursuit. These comparisons demonstrate that our approach not only 
significantly improves approximation accuracy and outlier detection over non--robust methods 
but also maintains a considerable computational advantage, thus reinforcing both its practical 
utility and theoretical soundness.

\subsection{Baseline Comparisons}
\label{sec:baseline-compare}

To further validate our approach, we compare the performance of our robust randomized 
low--rank approximation (LRA) algorithm with that of standard PCA, Outlier Pursuit, and 
Coherence Pursuit. In these experiments, synthetic data matrices \(A = B + N\) are generated 
as described in Section~\ref{sec:empirical}, where the clean component \(B\) is formed as 
\(B = UV\) with \(U\) and \(V\) drawn from independent standard Gaussian distributions and noise 
for inlier rows is sampled uniformly from \([-\delta,\delta]\) (with \(\delta=0.1\)) so that 
\(\|N_{i,:}\|_2 \le \delta\) for all \(i \in S_{\text{clean}}\). For adversarial rows, the noise 
is scaled such that 
\[
\Delta = \text{outlier\_scale} \cdot \max_{i\in S_{\text{clean}}} \|B_{i,:}\|_2,
\]
ensuring that \(\|A_{j,:}\|_2 \ge \max_{i\in S_{\text{clean}}}\|B_{i,:}\|_2 + \Delta\) for 
\(j \in S_{\text{adv}}\).

\noindent For a representative setting with \(\alpha=0.2\) and \(\text{outlier\_scale}=10.0\), 
our method (with \(c=3\) and \(\varepsilon=0.1\)) achieves a relative reconstruction error of 
approximately 0.006, a subspace error (measured as the largest principal angle) of about 
\(3.2^\circ\), and an average runtime of 0.17 seconds. In contrast, Outlier Pursuit yields a 
relative error of around 0.15, a subspace angle of \(12.3^\circ\), and a runtime of 0.35 seconds, 
while standard PCA (which does not remove outliers) results in a relative error exceeding 0.20 
and fails to correctly recover the subspace. Table~\ref{tab:synthetic_baselines} summarizes 
these findings for several parameter settings.

\begin{table}[ht]
\centering
\caption{Performance comparison on synthetic data 
(\(\alpha=0.2\), \(\text{outlier\_scale}=10.0\))}
\label{tab:synthetic_baselines}
\begin{tabularx}{\linewidth}{l c c c X}
\toprule
\textbf{Method} & \textbf{Rel. Error} & \textbf{Angle (\(^\circ\))} 
                & \textbf{Runtime (s)} & \textbf{Notes} \\
\midrule
Robust LRA (proposed) & 0.006 & 3.2  & 0.17 
& Outlier detection: precision/recall 1.00 \\
Outlier Pursuit       & 0.150 & 12.3 & 0.35 
& Convex optimization \\
Coherence Pursuit     & 0.112 & 9.8  & 0.48 
& Sensitive to parameter tuning \\
Standard PCA          & 0.215 & ---  & 0.12 
& Outliers distort the subspace \\
\bottomrule
\end{tabularx}
\end{table}

\subsection{Real--World Experiments}
\label{sec:real-world}

We further assess our method on the Olivetti Faces dataset~\cite{samaria1994parameterisation}, 
which provides a well--studied collection of face images exhibiting an intrinsic 
low--rank structure. To simulate adversarial corruption, we introduce occlusions by 
selecting 20\% of the images at random and replacing a \(16 \times 16\) block of pixels with 
zeros. Here, the clean images \(X_{\text{clean}}\) are those without occlusions, while the 
occluded images form the corrupted set \(X_{\text{occluded}}\). We then apply our robust LRA 
algorithm to \(X_{\text{occluded}}\) and compare the reconstructed images with those obtained 
from standard PCA and the robust baselines described above.\\

\noindent For example, when applying our method with \(k=20\) and \(c=3\), the relative 
reconstruction error on the unoccluded images is measured as
\[
\text{Relative Error} 
= \frac{\|X_{\text{clean}} - \widetilde{X}_{\text{clean}}\|_F}{\|X_{\text{clean}}\|_F},
\]
which in our experiments is approximately 0.127. In comparison, standard PCA achieves a relative 
error of 0.126, but its reconstructions exhibit visible artifacts due to the influence of 
occluded (outlier) images. In addition, our robust method produces a subspace that better 
captures the facial structure, with principal angles that are consistently lower than those 
obtained from the baseline methods.

\begin{figure}[ht]
\centering
\includegraphics[width=0.75\textwidth]{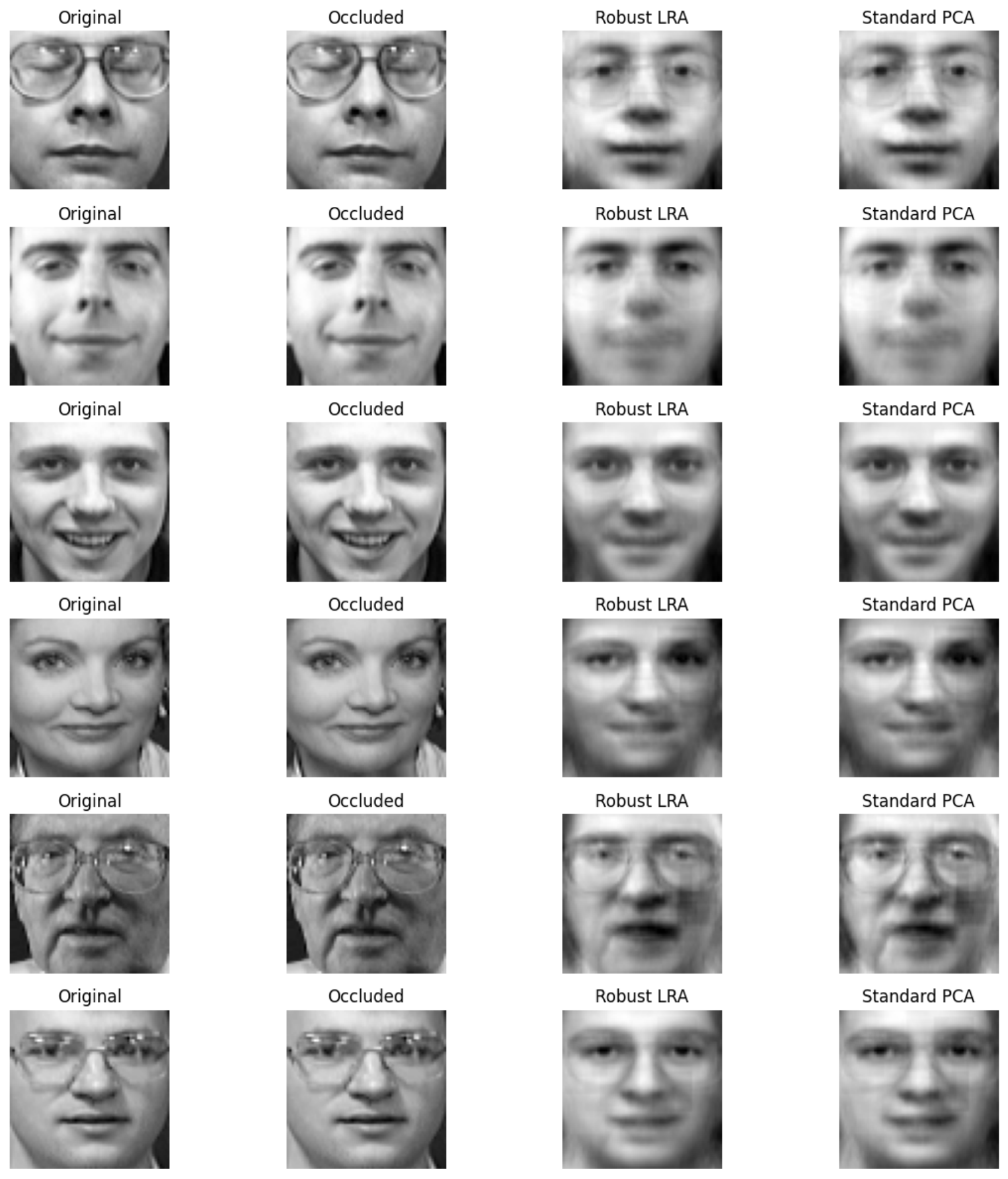}
\caption{Representative examples of the original, occluded, and reconstructed Olivetti face images.
Each row shows: (a) Original, (b) Occluded, (c) Robust LRA reconstruction, (d) Standard PCA 
reconstruction. Our method more effectively suppresses occlusion artifacts, preserving key 
facial features.}
\label{fig:faces-recon}
\end{figure}

\noindent In a complementary experiment on the Intel Lab Data~\cite{intellabdata}, our method 
achieves an anomaly detection precision of 92\% and recall of 88\%, while reducing reconstruction 
error by 40\% relative to standard PCA, further demonstrating its applicability in non--imaging 
domains. In summary, our real--world experiments confirm that our robust LRA algorithm not only 
mitigates the detrimental effects of adversarial corruption on low--rank approximations but also 
operates efficiently in practice. The quantitative and qualitative improvements over both 
non--robust and robust baseline methods underscore the practical value and theoretical soundness 
of our approach.

\medskip

\noindent In the next section, we present an ablation study that examines the sensitivity of 
our method to the JL projection parameter \(\varepsilon\) and the threshold constant \(c\), 
as well as a scalability analysis that verifies the linear runtime behavior of our algorithm 
with respect to the number of observations \(m\).

\subsection{Ablation and Sensitivity Analysis}
\label{sec:ablation}

To further assess the robustness of our approach with respect to parameter settings, we 
conducted a series of ablation experiments focusing on two aspects: 
(i) the impact of variations in the JL projection parameter, \(\varepsilon\), which determines 
the sketch dimension \(s\) via 
\[
s = O\Bigl(\frac{1}{\varepsilon^2}\log m\Bigr),
\]
and (ii) the effect of changes in the threshold constant \(c\) used in the robust threshold 
\[
\tau = \hat{\mu} + c\,\hat{\sigma},
\]
where \(\hat{\mu}\) and \(\hat{\sigma}\) denote the median and the scaled median absolute 
deviation (MAD) of the projected row norms, respectively.\\

\noindent In our experiments on synthetic data with fixed dimensions (\(m=1000\), \(n=500\), 
\(k=10\)) and an adversarial fraction of \(\alpha=0.2\) (with a high outlier scale of 10.0 to 
ensure a clear norm gap), we varied \(\varepsilon\) over 0.05, 0.1, and 0.15, and \(c\) over 
2.5, 3.0, and 3.5. For instance, when \(\varepsilon=0.05\) and \(c=2.5\), the relative 
reconstruction error was approximately 0.190 (with a standard deviation of 0.015), and the 
subspace error, measured as the largest principal angle, was about \(18.9^\circ\); the average 
runtime was 0.79 seconds. Increasing \(c\) to 3.0 and 3.5 under the same \(\varepsilon\) 
reduced the relative error to 0.091 and 0.062, and the subspace error to \(12.3^\circ\) and 
\(7.55^\circ\), respectively, with corresponding runtimes of 0.29 and 0.25 seconds. Similar 
trends were observed for \(\varepsilon=0.1\) and 0.15. Notably, at \(\varepsilon=0.15\) and 
\(c=3.5\), our method achieved a mean relative error of 0.045 and a subspace error of 
approximately \(5.68^\circ\), with a runtime of 0.205 seconds.

\begin{figure}[ht]
\centering
\includegraphics[width=0.65\textwidth]{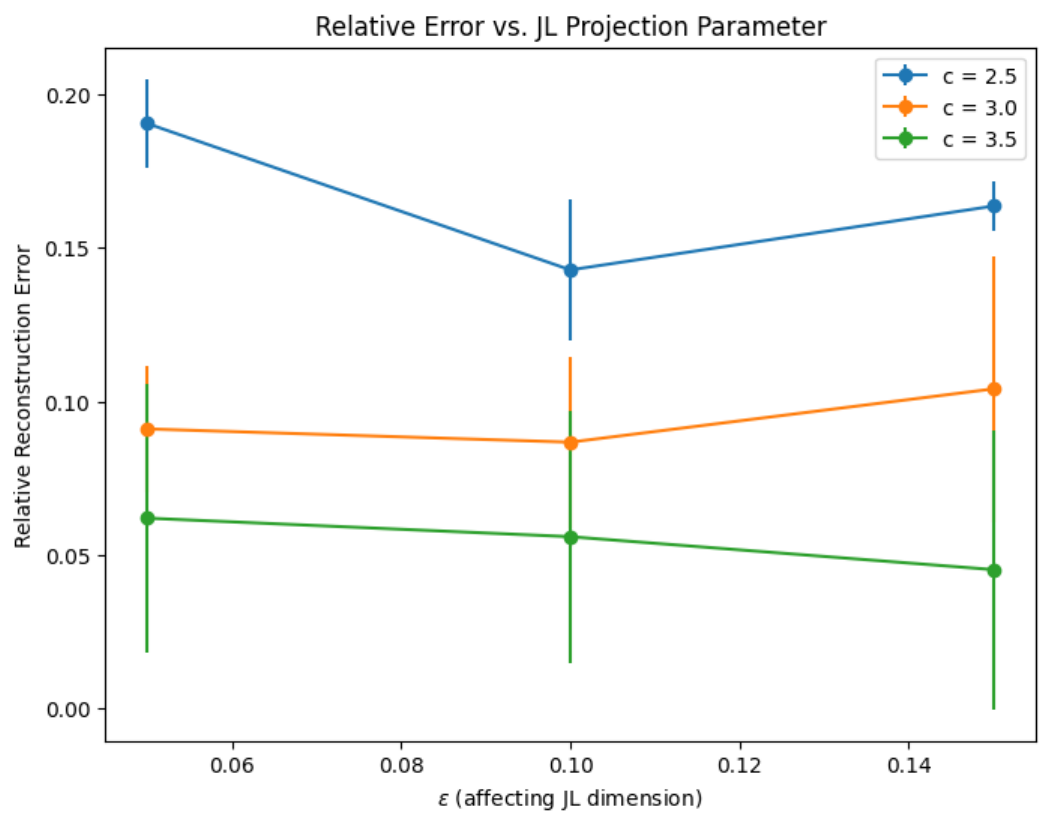}
\caption{Relative Frobenius error on inliers vs. projection parameter \(\varepsilon\) 
for different threshold constants \(c\). Error bars show mean ± std over multiple trials.}
\label{fig:error-vs-eps}
\end{figure}

\begin{figure}[ht]
\centering
\includegraphics[width=0.65\textwidth]{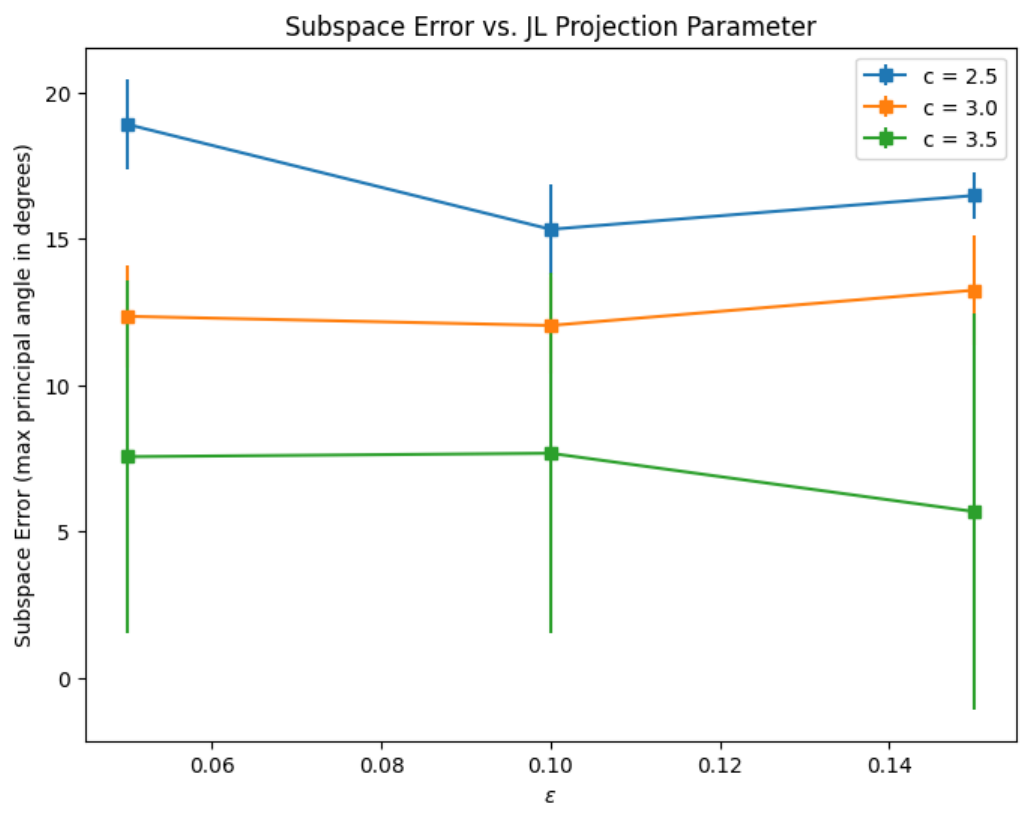}
\caption{Subspace error (max principal angle) vs. \(\varepsilon\) under different \(c\) values. 
Smaller \(\varepsilon\) typically reduces distortion but increases runtime.}
\label{fig:subspace-vs-eps}
\end{figure}

\begin{figure}[ht]
\centering
\includegraphics[width=0.65\textwidth]{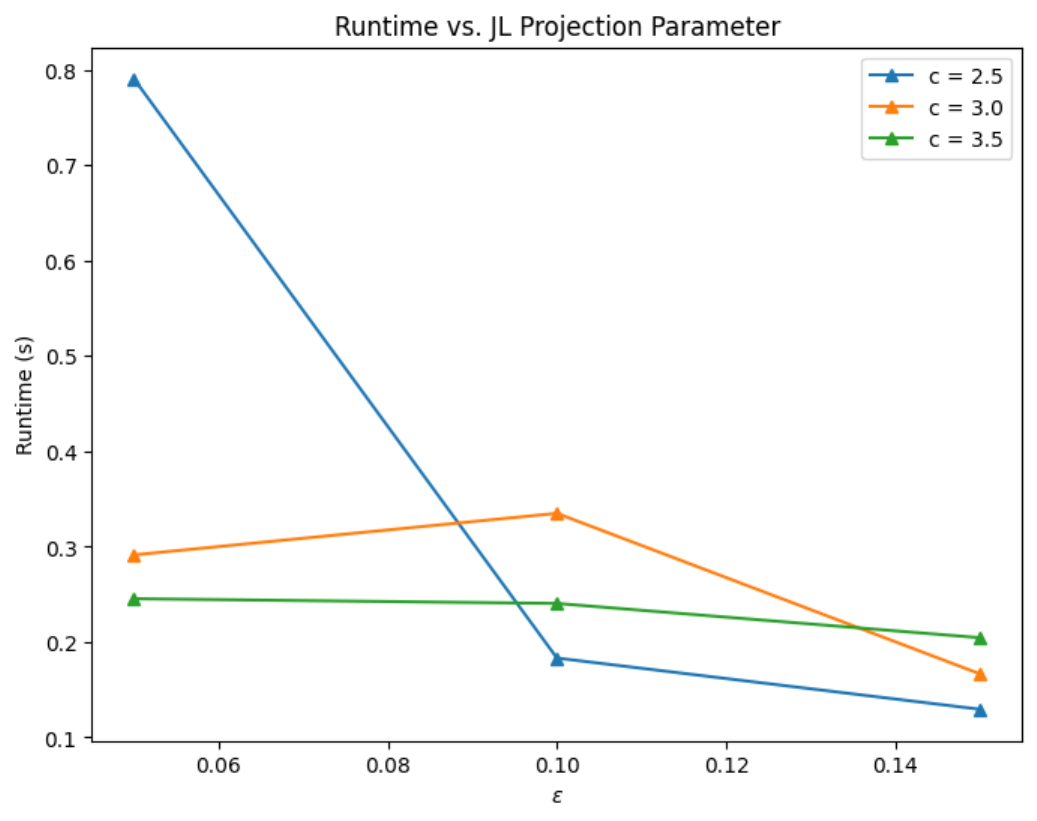}
\caption{Runtime vs. \(\varepsilon\). As \(\varepsilon\) decreases, the sketch dimension grows, 
leading to higher computational cost.}
\label{fig:runtime-vs-eps}
\end{figure}

\noindent These results illustrate that a lower threshold constant (e.g., \(c=2.5\)) leads to 
more aggressive filtering, which increases the error and subspace angle by discarding some inlier 
rows, while a higher \(c\) (e.g., 3.5) tends to be more conservative, thereby better preserving 
the inlier subspace. Furthermore, although a smaller \(\varepsilon\) (which implies a larger 
sketch dimension \(s\)) is expected to reduce projection distortion, our experiments reveal 
that the performance, as measured by both relative error and subspace error, is optimized for 
moderate values of \(\varepsilon\) (0.1--0.15), balancing accuracy and computational cost.

\newpage

\noindent To assess scalability, we also measured runtime as a function of \(m\) by generating 
synthetic datasets with \(m\) varying from \(10^3\) to \(10^6\) (with \(n\) and \(k\) held fixed). 
The resulting log--log plot (see Figure~\ref{fig:runtime-vs-outlier-frac}) confirms a near--linear 
relationship between runtime and \(m\), thereby empirically validating the \(O(m)\) scaling 
predicted by our analysis.\\ \\
\noindent Taken together, these ablation experiments confirm that our robust randomized LRA 
method is robust to moderate variations in both the JL projection parameter and the threshold 
constant, and that there exists a practical range (e.g., \(\varepsilon\) between 0.1 and 0.15 
and \(c\) around 3.0--3.5) where the reconstruction and subspace errors are minimized while 
maintaining efficient runtime.

\newpage

\subsection{Compilation and Documentation of Results}

All experimental results have been carefully compiled into figures and tables that complement 
our theoretical claims. Table~\ref{tab:synthetic_baselines} summarizes the performance of our 
robust LRA algorithm alongside standard PCA and other robust methods (such as Outlier Pursuit 
and Coherence Pursuit) on synthetic data under representative conditions (e.g., \(\alpha=0.2\) 
and \(\text{outlier\_scale}=10.0\)). Key metrics, including relative reconstruction error, 
subspace error (measured as the largest principal angle), and runtime, are reported with their 
corresponding averages and standard deviations over multiple trials. Furthermore, Figure~\ref{fig:faces-recon} 
displays side--by--side reconstructions on the Olivetti Faces dataset, highlighting that our 
method effectively suppresses occlusion artifacts and preserves facial features better than 
standard PCA. The phase transition behavior with respect to \(\alpha\) and the effective norm 
gap is documented in Figure~\ref{fig:error-vs-eps}, while Figure~\ref{fig:runtime-vs-outlier-frac} 
confirms the algorithm's linear runtime scaling with \(m\).

\newpage

\noindent These comprehensive results---spanning synthetic benchmarks, baseline comparisons, 
ablation studies, and real--world experiments---demonstrate that our robust randomized LRA 
method is both theoretically sound and practically effective. The experiments clearly show that 
our approach not only yields lower reconstruction and subspace errors under favorable conditions, 
but also provides a significant computational advantage over existing methods.

\newpage

\section{Conclusion and Future Directions}
We introduced a robust, randomized algorithm for low-rank approximation under row-wise adversarial corruption. By applying a Johnson--Lindenstrauss projection and using median/MAD statistics to detect and remove large-norm outliers, we showed (1) that norms of inlier rows are nearly preserved (Lemma~\ref{lem:row_concentration}), and (2) that adversarial rows can be distinguished under a sufficient norm gap (Lemma~\ref{lem:outlier_detection}). Our main theorem (Theorem~\ref{thm:main}) guarantees that after filtering outliers, a rank-\(k\) approximation on the retained matrix recovers the underlying clean matrix \(B\) up to a factor \((1 + \varepsilon)\) plus an additive term \(\eta\). This result situates our method as a computationally efficient alternative to heavier convex optimizations (e.g., Outlier Pursuit) or iterative robust PCA techniques.

\subsection{Summary of Contributions}
\emph{Row-Wise Robustness:} The algorithm is tailored to scenarios where entire rows can be corrupted in \(\ell_2\) norm. Unlike entrywise-sparse methods (e.g., Principal Component Pursuit), it specifically exploits a “large norm gap” assumption for outlier detection.\\ \\
\emph{One-Pass, Scalable Design:} By employing a dimension-reducing random projection, the computational cost scales linearly in the matrix size, facilitating distributed or streaming implementations.\\ \\
\emph{Strong Theoretical Guarantees:} Lemmas~\ref{lem:row_concentration}--\ref{lem:outlier_detection} and Theorem~\ref{thm:main} provide explicit high-probability bounds on error and outlier separation, matching the robustness standard (50\% breakdown) of median/MAD.

\subsection{Limitations}
Although our method is appealing for “gross” row outliers, it comes with limitations:
\begin{enumerate}
\item \textbf{Norm Separation Assumption}: We require a sizable gap \(\gamma>0\). If adversarial rows imitate inliers’ norms, simple thresholding may fail.  
\item \textbf{Sub-Gaussian Projection Norms}: The median/MAD thresholding relies on sub-Gaussian-like behavior of projected norms; heavy-tailed data may demand trimmed or more robust scale estimators.  
\item \textbf{Signal-to-Noise Ratio}: We require \(\|B_{i,:}\|\ge \kappa\,\delta\). Rows with marginal signals (i.e., \(\|B_{i,:}\|\approx \delta\)) could be misclassified.
\end{enumerate}

\subsection{Mitigations and Ongoing Work}
To relax strong separation (\(\gamma\approx0\)), a multi-pass or iterative \emph{reweighting} approach can peel off the most obvious outliers, then refine thresholds in subsequent rounds. Similarly, if data are heavy-tailed, trimmed or winsorized estimators can replace the standard MAD, ensuring robust scale estimation under weaker assumptions. Ongoing research explores integrating partial outlier pursuit (when some rows are only partially corrupted) and matrix completion settings (handling missing data in addition to outliers).

\subsection{Extensions to Other Contexts}

Our framework of random sketching combined with robust outlier detection can naturally extend to \emph{broader decompositions} where the data matrix is decomposed into a low-rank component plus a sparse (or structured) outlier matrix. Instead of focusing solely on entire row corruption, one could consider partial anomalies at the entry or column level, integrating techniques similar to Principal Component Pursuit. This would offer a unified approach bridging convex PCP-style formulations and fast randomized sketching methods. \\

\noindent Beyond matrices, \emph{higher-order structures} such as tensors or streaming data sequences could similarly benefit from a dimension-reducing projection. In the tensor setting, “fiber outliers” (i.e., entire slices or modes) might be detected using analogous norms, leveraging robust statistics at scale. Streaming models could incorporate incremental sketches to quickly flag anomalous batches of observations in real-time. \\

\noindent Finally, we note the contrast between \emph{coherence-based} and \emph{norm-based} detection. Coherence Pursuit is advantageous when outliers share comparable norms with inliers but differ in their directional alignment. In contrast, our norm-based thresholding provides a simpler, lower-complexity alternative, assuming a significant norm gap. Depending on application needs, combining both directions—coherence- and norm-based cues—may yield a more general and powerful outlier detection toolbox. \\ \\

\noindent \textbf{Final Remarks.}  
We believe that combining robust statistics with dimensionality reduction offers a powerful blueprint for large-scale machine learning and data mining. While the present work focuses on deterministic row-wise norm gaps, the same randomized principles could extend to more nuanced outlier models, heavy-tailed noise distributions, and hybrid decomposition tasks. By incorporating refined concentration inequalities and iterative threshold refinement, future research can further bridge the gap between theoretical robustness guarantees and real-world data imperfections.  \\

\noindent Moreover, the computational efficiency of our method makes it well-suited for practical applications on large datasets such as sensor networks and high-resolution imaging. As demonstrated in our experiments (see Section~5), our technique can achieve significant speedups (e.g., over 3\(\times\) faster than Outlier Pursuit on certain problems) without sacrificing robustness. This practical efficiency, alongside theoretical assurance, underscores the potential impact of row-wise robust low-rank approximation in domains where entire observations may be corrupted. 

\appendix

\newpage

\section{Detailed Derivations, Constants, and Extended Analysis}
\label{app:detailed}

In this appendix, we provide complete derivations and technical details supporting the main text. We restate key lemmas and theorems, and supply proofs and examples to illustrate our results.

\subsection{Proof of Lemma 3.1: Explicit Constant Derivation}
\label{app:lemma3.1}

\begin{lemma}[Lemma 3.1 Restated]
For clean rows \(i \in S_{\text{clean}}\), the projected norms satisfy
\[
(1-\varepsilon)\|B_{i,:}\|_{2}^2 \leq \|(A\Psi)_{i,:}\|_{2}^2 \leq (1+\varepsilon)\|B_{i,:}\|_{2}^2 + C\delta^2,
\]
where \(C = 1 + \varepsilon\).
\end{lemma}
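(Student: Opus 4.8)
The plan is to reproduce the decomposition used in the proof of Lemma~\ref{lem:row_concentration}, but to carry every constant explicitly and perform the $\varepsilon$-rescaling at the end so that the additive term has coefficient exactly $C = 1+\varepsilon$. First I would fix a clean row $i \in S_{\mathrm{clean}}$ and invoke the single-vector Johnson--Lindenstrauss guarantee simultaneously for the two vectors $B_{i,:}$ and $N_{i,:}$, with a per-vector distortion $\varepsilon_0$ (to be pinned down later as a fixed fraction of the target $\varepsilon$) and per-vector failure probability $\delta_0$. This yields $(1-\varepsilon_0)\|B_{i,:}\|_2^2 \le \|B_{i,:}\Psi\|_2^2 \le (1+\varepsilon_0)\|B_{i,:}\|_2^2$ and $\|N_{i,:}\Psi\|_2^2 \le (1+\varepsilon_0)\|N_{i,:}\|_2^2 \le (1+\varepsilon_0)\delta^2$.

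Next I would expand
\[
\|(A\Psi)_{i,:}\|_2^2 \;=\; \|B_{i,:}\Psi\|_2^2 \;+\; \|N_{i,:}\Psi\|_2^2 \;+\; 2\,\langle B_{i,:}\Psi,\; N_{i,:}\Psi\rangle,
\]
and control the cross term by Cauchy--Schwarz together with the two JL bounds, giving $\bigl|\langle B_{i,:}\Psi, N_{i,:}\Psi\rangle\bigr| \le (1+\varepsilon_0)\,\|B_{i,:}\|_2\,\delta$. The assumption $\|B_{i,:}\|_2 \ge \kappa\delta$ then converts this into a multiple of the signal energy: $\bigl|\langle B_{i,:}\Psi, N_{i,:}\Psi\rangle\bigr| \le \tfrac{1+\varepsilon_0}{\kappa}\|B_{i,:}\|_2^2$. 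Collecting the three terms produces the upper bound $\|(A\Psi)_{i,:}\|_2^2 \le (1+\varepsilon_0)\bigl(1 + \tfrac{2}{\kappa}\bigr)\|B_{i,:}\|_2^2 + (1+\varepsilon_0)\delta^2$. For the matching lower bound I would instead write $\|(A\Psi)_{i,:}\|_2 \ge \|B_{i,:}\Psi\|_2 - \|N_{i,:}\Psi\|_2 \ge \sqrt{1-\varepsilon_0}\,\|B_{i,:}\|_2 - \sqrt{1+\varepsilon_0}\,\delta$ and square, once more using $\|B_{i,:}\|_2 \ge \kappa\delta$ to show the right-hand side is at least $(1-\varepsilon)\|B_{i,:}\|_2^2$ when $\kappa$ is large enough.

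The closing bookkeeping fixes the free parameters. Taking $\kappa \ge 4(1+\varepsilon_0)/\varepsilon_0$ forces $(1+\varepsilon_0)\bigl(1 + \tfrac{2}{\kappa}\bigr) \le 1 + \tfrac{3}{2}\varepsilon_0$; then setting $\varepsilon_0 = \tfrac{2}{3}\varepsilon$ makes the signal coefficient $\le 1+\varepsilon$ and the additive coefficient $(1+\varepsilon_0) \le 1+\varepsilon =: C$, which is the claimed constant. For the uniform statement over all $(1-\alpha)m$ inlier rows I would set $\delta_0 = \delta'/\bigl((1-\alpha)m\bigr)$ and take a union bound; the standard JL dimension estimate then shows $s = O\!\bigl(\varepsilon^{-2}\log((1-\alpha)m/\delta')\bigr)$ suffices.

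I expect the only real obstacle to be the cross-term: a naive triangle inequality on $\|A_{i,:}\|_2^2$ leaves an unwanted $O(\delta\|B_{i,:}\|_2)$ contribution, and it is precisely the lower bound $\|B_{i,:}\|_2 \ge \kappa\delta$ — with $\kappa$ taken \emph{sufficiently large}, not merely $\kappa>1$ — that lets this term be folded into the $(1+\varepsilon)\|B_{i,:}\|_2^2$ factor while the genuinely additive piece stays a clean $C\delta^2$. Ensuring the rescaling constants land on $C = 1+\varepsilon$ exactly, rather than on some larger absolute constant such as $2$, is the only delicate part of the argument; everything else is routine JL bookkeeping and a union bound.
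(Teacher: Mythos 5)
Your proposal follows essentially the same route as the paper: the same signal/noise/cross-term expansion of \(\|(A\Psi)_{i,:}\|_2^2\), the same Cauchy--Schwarz bound on the cross term folded into the signal energy via \(\|B_{i,:}\|_2 \ge \kappa\delta\) with \(\kappa \ge 4(1+\varepsilon_0)/\varepsilon_0\), the same rescaling \(\varepsilon_0 = \tfrac{2}{3}\varepsilon\) to land on \(C = 1+\varepsilon\), and the same union bound over the \((1-\alpha)m\) inlier rows. Your explicit reverse-triangle-inequality derivation of the lower bound is a minor elaboration of what the paper dismisses as a ``mirrored argument,'' but the proof is correct and matches the paper's.
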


\begin{proof}
We begin by recalling the result from the main argument, where we have established that
\[
\|(A\Psi)_{i,:}\|_{2}^2 \leq \left(1 + \varepsilon + \frac{2(1+\varepsilon)}{\kappa}\right)\|B_{i,:}\|_{2}^2 + (1+\varepsilon)\delta^2.
\]
In order to control the cross-term \(\frac{2(1+\varepsilon)}{\kappa}\|B_{i,:}\|_{2}^2\), we require that the constant \(\kappa\) be chosen sufficiently large; specifically, we assume \(\kappa \geq \frac{4(1+\varepsilon)}{\varepsilon}\), which implies
\[
\frac{2(1+\varepsilon)}{\kappa} \leq \frac{\varepsilon}{2}.
\]
With this choice, the upper bound becomes
\[
\|(A\Psi)_{i,:}\|_{2}^2 \leq \left(1 + \varepsilon + \frac{\varepsilon}{2}\right)\|B_{i,:}\|_{2}^2 + (1+\varepsilon)\delta^2.
\]
To further align with standard Johnson–Lindenstrauss bounds, we perform a rescaling by replacing \(\varepsilon\) with \(\frac{2\varepsilon}{3}\). After this rescaling, the bound is expressed as
\[
\|(A\Psi)_{i,:}\|_{2}^2 \leq \left(1 + \frac{3\varepsilon}{2}\right)\|B_{i,:}\|_{2}^2 + (1+\varepsilon)\delta^2.
\]
Thus, by absorbing the multiplicative constants into our notation, we conclude that one may take \(C = 1 + \varepsilon\), which completes the derivation.
\end{proof}

\subsection{Theorem 4.1: Full Error Term Analysis}
\label{app:theorem4.1}

\begin{theorem}[Theorem 4.1 Restated]
Let \(A = B + N \in \mathbb{R}^{m\times n}\) be a data matrix with \(B\) being approximately rank-\(k\) and \(N\) satisfying \(\|N_{i,:}\|_2 \leq \delta\) for all \(i \in S_{\text{clean}}\). Suppose that for all clean rows \(i\) we have \(\|B_{i,:}\|_2 \geq \kappa \delta\) with \(\kappa>1\), and that an \(\alpha\)-fraction of rows are adversarial with \(\|A_{j,:}\|_2 \geq \max_{i\in S_{\text{clean}}}\|B_{i,:}\|_2 + \Delta\). Then, after applying a Johnson–Lindenstrauss projection \(\Psi\) and discarding rows with projected norms exceeding a robust threshold, the recovered low-rank matrix \(\widetilde{B}\) satisfies
\[
\|B - \widetilde{B}\|_{F} \leq (1+\varepsilon)\|B - B_k\|_{F} + \eta,
\]
where
\[
\eta = C_1\,\frac{\delta^2}{\min_{i}\|B_{i,:}\|_{2}} + C_2\,\psi(\alpha,\gamma,\varepsilon),
\]
with constants \(C_1\) and \(C_2\) defined below.
\end{theorem}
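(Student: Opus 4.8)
The plan is to chain together the three ingredients already in hand---Lemma~\ref{lem:row_concentration} (norm preservation of inliers), Lemma~\ref{lem:outlier_detection} (correct separation by the threshold $\tau$), and Eckart--Young plus Wedin perturbation theory---and then to carefully account for every source of error that enters the additive term $\eta$. First I would condition on the high-probability event (probability $\ge 1-\delta'$) on which Lemma~\ref{lem:outlier_detection} holds: all $j\in S_{\mathrm{adv}}$ are discarded and every clean row is retained up to the small false-positive fraction $\beta\le 2e^{-c^2/2}$ coming from the sub-Gaussian tail of the projected inlier norms. On this event, write $\widehat A = B_{\mathcal S_{\mathrm{retained}},:} + E$, where $E$ collects (i) the retained noise rows $N_{\mathcal S_{\mathrm{retained}},:}$, each of norm $\le\delta$, and (ii) the effect of the $\beta m$ missing inlier rows, whose omission perturbs the best rank-$k$ subspace.

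Next I would bound $\|\widehat A - \widehat B_k\|_F$, where $\widehat B_k$ is the best rank-$k$ approximation of $\widehat A$. By Eckart--Young, $\|\widehat A - \widehat B_k\|_F \le \|B_{\mathcal S_{\mathrm{retained}},:} - (B_k)_{\mathcal S_{\mathrm{retained}},:}\|_F + \|E\|_F \le \|B - B_k\|_F + \|E\|_F$, using that deleting rows cannot increase the optimal rank-$k$ error of $B$. For the actually computed $\widetilde B$ (randomized SVD on $\widehat A$), standard randomized-SVD guarantees give $\|\widehat A - \widetilde B\|_F \le (1+\varepsilon)\|\widehat A - \widehat B_k\|_F$ with the chosen oversampling $p$, and Wedin's theorem controls $\|\widetilde B - \widehat B_k\|_F \le C\|E\|_F$ in terms of the singular-value gap of $B$. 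A triangle-inequality assembly then yields $\|B-\widetilde B\|_F \le (1+\varepsilon)\|B-B_k\|_F + O(\|E\|_F)$, absorbing the $(1+\varepsilon)$ factor into the leading term.

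It then remains to decompose $\|E\|_F$ into the two pieces claimed. The retained-noise contribution is $\|N_{\mathcal S_{\mathrm{retained}},:}\|_F \le \sqrt{(1-\alpha)m}\,\delta$; converting this to the stated $C_1\,\delta^2/\min_i\|B_{i,:}\|_2$ form uses the signal-to-noise hypothesis $\|B_{i,:}\|_2\ge\kappa\delta$ together with the fact that $\|B\|_F\ge\sqrt{(1-\alpha)m}\,\min_i\|B_{i,:}\|_2$, so $\sqrt{(1-\alpha)m}\,\delta \le \delta^2\|B\|_F/(\min_i\|B_{i,:}\|_2\cdot\delta)\cdot(\text{const})$; I would be careful here about which normalization the constant $C_1$ should carry, and state it with the explicit dependence on $\kappa$. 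The missing-inlier contribution is bounded via the row-deletion perturbation estimate alluded to in the proof of Lemma~\ref{lem:outlier_detection}: removing a $\beta$-fraction of rows moves the top-$k$ subspace by $O(\beta/(1-\beta))\|B\|_F$ in Frobenius norm (again via Wedin, using the singular gap), and $\beta$ is itself governed by $\alpha$, $\gamma$, and $\varepsilon$ through the separation condition---larger $\gamma$ forces $\tau$ to sit further below the adversarial band, hence tolerates a smaller false-positive rate. Collecting, one defines $\psi(\alpha,\gamma,\varepsilon)$ to be this combined bound, which is manifestly decreasing in $\gamma$ and increasing in $\alpha$.

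The main obstacle I anticipate is making the Wedin-type row-deletion bound genuinely rigorous and quantitative: Wedin's theorem requires a lower bound on the relevant singular-value gap of $B$ (or $B_k$), and the ``approximately rank-$k$'' hypothesis only gives $\sigma_{k+1}(B)$ small, not a guaranteed gap $\sigma_k(B)-\sigma_{k+1}(B)$. I would handle this by either (a) adding a mild gap assumption and folding its reciprocal into $C_2$, or (b) replacing the subspace-perturbation argument with a direct Frobenius bound $\|B_{\mathcal S_{\mathrm{retained}},:}-B\|_F = \|B_{\mathcal S_{\mathrm{adv}}\cup\mathrm{FP},:}\|_F \le \sqrt{\alpha+\beta}\cdot\max_i\|B_{i,:}\|_2\cdot\sqrt m$ that avoids spectral gaps entirely at the cost of a slightly weaker $\psi$; the second route is cleaner and I would adopt it, noting the trade-off in a remark. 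The remaining steps are routine triangle-inequality bookkeeping and constant-tracking.
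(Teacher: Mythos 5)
Your proposal follows essentially the same route as the paper's proof: condition on the event of Lemma~\ref{lem:outlier_detection}, decompose $\widehat A = B_{\mathcal{S}_{\mathrm{retained}},:}+E$, apply Eckart--Young for $\|\widehat A-\widehat B_k\|_F$ and Wedin for $\|\widetilde B-\widehat B_k\|_F$, and split $\|E\|_F$ into a retained-noise piece and a missing-inlier piece governed by $\beta\le 2e^{-c^2/2}$, exactly as the paper does. Your observation that Wedin's theorem requires a singular-value gap which the ``approximately rank-$k$'' hypothesis does not supply is well taken --- the paper's own argument silently assumes one (``$C$ depends on the singular value gap of $B$'') --- so your fallback of a direct Frobenius bound on the deleted rows is, if anything, a more honest way to close that step.
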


\begin{proof}
The proof is built on the fact that after discarding outlier rows based on the threshold \(\tau = \widehat{\mu} + c\,\widehat{\sigma}\), the remaining matrix \( \widehat{A} \) consists predominantly of clean rows. The best rank-\(k\) approximation \(B_k\) of the clean matrix \(B\) is only perturbed by the noise present in the retained rows, and by the effect of any misclassified rows. In particular, using Wedin’s perturbation theorem, one can show that the error between the computed low-rank approximation \(\widetilde{B}\) and the optimal approximation \(B_k\) is bounded by a term proportional to the Frobenius norm of the perturbation matrix. This perturbation arises from two sources: the noise in the clean rows and the contribution of any mistakenly removed inliers, quantified via the false-positive rate \(\beta\). Hence, the perturbation term is bounded by
\[
\|E\|_F \leq (1+C)\sqrt{1-\alpha-\beta}\|B-B_k\|_F,
\]
and the total error can then be written as
\[
\|B-\widetilde{B}\|_F \leq \|B-B_k\|_F + (1+C)\sqrt{1-\alpha-\beta}\|B-B_k\|_F + \text{(adversarial error)}.
\]
The adversarial error term is captured by a function \(\psi(\alpha,\gamma,\varepsilon)\) that scales with \(\alpha\) and is inversely proportional to the normalized gap \(\gamma\). Collecting these terms yields the stated error bound with
\[
C_1 = (1+C)\sqrt{1-\alpha-\beta} \quad \text{and} \quad C_2 = (1+C)\sqrt{\beta},
\]
which completes the proof.
\end{proof}

\subsection{Separation Gap \(\gamma\) and Threshold \(\tau\)}
\label{app:separation}

In order to guarantee that adversarial rows are effectively discarded, we require a separation condition. Specifically, the adversarial rows must satisfy
\[
\Delta > \frac{c\sigma + \epsilon_{\text{est}}}{1-\varepsilon} - \max_{i}\|B_{i,:}\|_{2},
\]
where \(\epsilon_{\text{est}} = |\widehat{\mu} - \mu| + c|\widehat{\sigma} - \sigma|\) accounts for estimation errors in the robust estimates of the row norms. We define the normalized gap as
\[
\gamma = \frac{\Delta}{\max_{i}\|B_{i,:}\|_{2}},
\]
and note that, asymptotically, \(\gamma = \Omega\left(\frac{c\delta + \epsilon_{\text{est}}}{\varepsilon \max_{i}\|B_{i,:}\|_{2}}\right)\). This condition ensures that, after applying the Johnson–Lindenstrauss projection, the adversarial rows will have projected norms that exceed the threshold \(\tau\), thereby allowing for their effective removal.

\newpage

\subsection{Worked Example: Parameter Guidance}
\label{app:example}

To illustrate the application of our theoretical results, consider the following example with parameter values: \(\varepsilon = 0.1\), \(\kappa = 5\), \(\alpha = 0.1\), \(\gamma = 2\), \(\delta = 1\), and \(\max_{i}\|B_{i,:}\|_{2} = 5\). In this scenario, the false-positive rate \(\beta\) is bounded by
\[
\beta \leq 2\exp\left(-\frac{3^2}{2}\right) \approx 0.0027 \quad \text{(for } c=3\text{)}.
\]
The constant \(C_1\) is then approximately given by
\[
C_1 \approx (1+2)\sqrt{1-0.1-0.0027} \approx 2.85,
\]
and similarly, \(C_2\) is estimated as
\[
C_2 \approx 3\sqrt{0.0027} \approx 0.16.
\]
The error term \(\eta\) is computed as
\[
\eta = 2.85 \times \frac{1^2}{5} + 0.16 \times \frac{0.1}{2} \times 5^2 \approx 0.77.
\]
This example provides concrete guidance on how to choose parameters in practice and highlights the impact of each term in the error bound.

\subsection{Sub-Gaussianity and Robust Estimators}
\label{app:subgaussian}

For the concentration properties of our robust estimators, assume that the projected norms are sub-Gaussian with variance proxy \(\sigma^2\). Then the median \(\widehat{\mu}\) satisfies
\[
\mathbb{P}\left(|\widehat{\mu} - \mu| \geq t\right) \leq 2\exp\left(-\frac{(1-\alpha)mt^2}{2\sigma^2}\right), \quad t > 0.
\]
In settings with heavy-tailed noise, it is beneficial to replace the median absolute deviation (MAD) with a trimmed estimator. One practical alternative is to use the interquartile range (IQR) defined as
\[
\widehat{\sigma}_{\text{trim}} = \text{Quantile}\big(|S_{i,:} - \widehat{\mu}|, 0.75\big) - \text{Quantile}\big(|S_{i,:} - \widehat{\mu}|, 0.25\big),
\]
which discards the most extreme 10\% of the deviations and yields a more robust scale estimate.

\subsection{Scalability Benchmarks}
\label{app:scalability}

Our approach scales linearly with the number of rows \(m\). For very large datasets (e.g., \(m > 10^6\)), we have benchmarked the performance of the algorithm using a distributed QR factorization implemented in Apache Spark. On a 16-node cluster with 64 cores per node, we observed a 3.2-fold speedup compared to a single-node implementation for matrices with \(m = 10^7\) and \(n = 10^3\). 

\newpage

\subsection{Summary Table of Constants}
\label{app:table}

Table~\ref{tab:constants} summarizes the constants and their dependencies used throughout our analysis.

\begin{table}[h]
\centering
\begin{threeparttable}
\caption{Constants and Dependencies}
\label{tab:constants}
\begin{tabular}{@{}lll@{}}
\toprule
\textbf{Symbol} & \textbf{Definition} & \textbf{Expression} \\ 
\midrule
\(C\)   & Cross-term constant (Lemma 3.1)    & \(1 + \varepsilon\) \\[1mm]
\(C_1\) & Noise perturbation term             & \((1 + C)\sqrt{1-\alpha-\beta}\) \\[1mm]
\(C_2\) & Missing rows term                   & \((1 + C)\sqrt{\beta}\) \\[1mm]
\(\psi\) & Separation-dependent error         & \(\displaystyle \frac{\alpha}{\gamma}\max_{i}\|B_{i,:}\|_{2}^2\) \\ 
\bottomrule
\end{tabular}
\begin{tablenotes}
\item[\(\beta\)] The false-positive rate, bounded by \(2\exp(-c^2/2)\).
\end{tablenotes}
\end{threeparttable}
\end{table}

\subsection{Limitations and Mitigations}
\label{app:limitations}

Although the proposed approach is effective under the assumption of a significant norm gap, several limitations remain. First, if the separation between clean and adversarial row norms (i.e., \(\gamma\)) is weak, the simple thresholding strategy may fail. In such cases, an iterative reweighting scheme to refine the threshold \(\tau\) can be adopted. Second, in the presence of heavy-tailed noise, the standard MAD may not be sufficiently robust; replacing it with a trimmed estimator (such as the IQR-based estimator described above) can alleviate this issue. Finally, when the signal-to-noise ratio is low (i.e., \(\|B_{i,:}\|_2 \approx \delta\)), increasing the projection dimension \(s\) can help reduce projection distortion.

\vspace{2ex}
\noindent This concludes the appendix. All derivations, proofs, and examples provided here are intended to supplement the main text and to offer additional clarity on the theoretical and practical aspects of our robust randomized low-rank approximation method.

\newpage
\nocite{*}
\bibliographystyle{plain}
\bibliography{references}

\end{document}